\newtheorem{thm}{Theorem}
\newtheorem{prop}{Proposition}
\newtheorem{cor}{Corollary}
\DeclareMathOperator*{\argmax}{arg\,max}
\DeclareMathOperator*{\argmin}{arg\,min}
\let\Ginclude@graphics\@org@Ginclude@graphics
\title[Deep Autoencoders: From Understanding to Generalization Guarantees]{Deep Autoencoders: \\  From Understanding to Generalization Guarantees}
\begin{document}

\maketitle

\begin{abstract}
A big mystery in deep learning continues to be the ability of methods to generalize when the number of model parameters is larger than the number of training examples. In this work, we take a step towards a better understanding of the underlying phenomena of Deep Autoencoders (AEs), a mainstream deep learning solution for learning compressed, interpretable, and structured data representations. 
In particular, we interpret how AEs approximate the data manifold by exploiting their continuous piecewise affine structure. 
Our reformulation of AEs provides new insights into their mapping, reconstruction guarantees, as well as an interpretation of commonly used regularization techniques. We leverage these findings to derive two new regularizations that enable AEs to capture the inherent symmetry in the data. Our regularizations leverage recent advances in the group of transformation learning to enable AEs to better approximate the data manifold without explicitly defining the group underlying the manifold. Under the assumption that the symmetry of the data can be explained by a Lie group, we prove that the regularizations ensure the generalization of the corresponding AEs.
A range of experimental evaluations demonstrate that our methods outperform other state-of-the-art regularization techniques.
%By providing a careful analysis of current AEs from a spline perspective, we can interpret the input-output mapping, in turn allowing us to derive conditions for generalization and reconstruction guarantee. These findings are then exploited to derive novel regularizations of AEs capturing the symmetry in the data. These regularizations allow to ensure the generalization of AEs in the finite training set regime. We validate our theoretical analysis by demonstrating how these regularizations significantly increase the generalization of the AE on various datasets.
\end{abstract}

\begin{keywords}%
Deep Network, Group Equivariant Network, Lie Group, Lie Algebra, Orbit, Partitioning, Generalization, Deep Autoencoders, Regression, Interpolation, Interpretability, Piecewise Linear Deep Network, Piecewise Affine Deep Network, Affine Spline Deep Network, Higher-order Regularization.
\end{keywords}

\section{Introduction}
Autoencoders (AEs) provide a rich and versatile framework that discovers the data's salient features in an unsupervised manner. They are commonly leveraged to efficiently perform compression \citep{cheng2018deep}, denoising \citep{eraslan2019single}, data completion \citep{tran2017missing}, as well as pre-training supervised DNs \citep{erhan2010does}. Solving these tasks is equivalent to discovering the data's underlying manifold, a task becoming challenging in the high dimensional and the finite samples regime \citep{tenenbaum2000global,wang2014generalized,wang2016auto,facco2017estimating}.
%This method has been developed with the common assumption that the data lies on a low-dimensional non-linear manifold. Solving those denoising or compression tasks is equivalent to discovering the underlying manifold structure of data, a task becoming challenging in the high dimensional and the finite samples regime \citep{wang2016auto}. %More precisely, AEs are deep learning models cascading linear and non-linear operators \cite{baldi2012autoencoders}, with two particularities; first, the latent representations are often of small dimensions; second, the output tries to produce a reconstruction of the input.
To overcome these challenges and improve the efficiency of AEs, various explicit or implicit regularizations have been proposed \citep{vincent2008extracting,rifai2011contractive,makhzani2013k,falorsi2018explorations}. Despite these improvements, the underlying mechanisms and generalization capability of AEs are still poorly understood \citep{li2018on,pmlr-v89-nguyen19a,lei2020geometric}. 
\\
\\
\indent
A compelling approach to understanding the inner mechanisms of DNs considers their capability at modeling the ubiquitous symmetries in the
data \citep{paul2014does,mallat2016understanding}. Theoretically grounded models such as the Deep Scattering Network and its derivatives have been derived in accordance with this principle \citep{bruna2013invariant,anden2014deep,chen2014unsupervised,joint}.
In \citet{cohen2016group,DBLP:journals/corr/abs-1801-10130,kondor2018generalization} they propose to explain the success of deep convolutional architectures through the development of a theory of equivariant DNs; in particular, they provide $(i)$ an understanding and formalism behind the equivariance properties of DNs as well as their generalization, and $(ii)$ reduce the sample complexity of DNs by exploiting well-known symmetry group inherent to the image manifold.
\\
\\
\indent
Besides explicitly imposing specific group of transformations, the studies of DNs through that lens mainly consider the properties of internal layers of DNs, e.g., {\em convolution, pooling, per-layer representation}. In this paper, we propose a global analysis by considering the DN from a geometrical standpoint. By global analysis, we consider the understanding of the output of a DN given its input in an end-to-end manner. Such analysis is presently performed by leveraging the analytical continuous piecewise affine (CPA) map formulation of DNs, as described in \citet{balestriero2018mad}. Such an approach has two significant advantages; it is agnostic of the architecture, e.g., type of layer, nonlinearities, number of layers, and it provides an analytical formula for the entire network mapping. These criteria are crucial since the understanding of AEs performed in this work has the goal of developing practical tools that are not tied to any specific AE architecture.
\\
\\
\indent
In the present work, the CPA formulation is leveraged to take a step into answering the following questions: $(i)$ How an AE can effectively approximate the data manifold? $(ii)$ How can one improve and guarantee the generalization of AEs exploiting the symmetry in the data?
\\
\\
\indent
We will execute this by considering the following two-fold approach:
%In this paper, we analytically characterize the mechanisms of AEs and develop a regularization that forces the approximation of the data manifold with generalization guarantees for a finite data regime. We demonstrate that, under this regularization, the AE achieves perfect generalization in the case of a Lie group based manifold. Besides this theoretical statement, our empirical results on real datasets demonstrate the performances of this regularization.
First, we provide an analytical and interpretable formulation of the CPA representation of the manifold spanned by AEs. We make explicit some critical properties of AEs such as what type of function do they belong to, how standard regularization techniques affect the AE mapping, and how the encoder and decoder per region affine mappings are related. 
Second, we exploit these insights to provide novel regularizations for AEs to capture the symmetry in the data. In particular, our regularizations constrain the global continuous piecewise affine surface spanned by AEs such that they adapt to the geometry of the data manifold modeled from as the orbit of a Lie group. We show that these regularizations constrain the entire surface even at locations in the manifold where data are missing, which is critical for the generalization of AEs. Besides, we show that these regularizations lead to generalization guarantees in the finite data regime.

Our contributions can be summarized as follows:
\begin{itemize}[leftmargin=*]
\setlength\itemsep{.004em}
    \item We highlight the fact that AEs provide a CPA approximation of the data manifold. From this analytical characterization, we interpret the role of the encoder, decoder, layer parameters, and latent dimension (Sec.~\ref{sec:AE_CPA}) as well as propose reconstruction guarantees (Sec.~\ref{sec:AE_RECONS}).
    
    \item Following these findings, we obtain interpretable formulas for the Jacobian and approximated Hessian of AEs (Sec.\ref{sec:tang-hess}); and leverage them to provide insights into standard regularization techniques employed in AEs (Sec.~\ref{sec:interp_usual}). 
    
    \item We demonstrate that when considering the symmetry of the data, we can impose constraints on an interpolation function, e.g., an AE, such that it approximates the data manifold driven by a Lie group (Sec.~\ref{sec:secondorderreg}). We turn these constraints into regularizations adapted to AEs and demonstrate their generalization guarantees under a finite data regime (Sec.~\ref{sec:secondorderAE}).
    
    \item We finally provide experimental validations of the developed regularizations which compete with state-of-the-art methods on various datasets (Sec.~\ref{sec:ex}).
    %we first extend previous work of Lie group transformation learning, \ref{sec:secondorderreg}, so that we can construct our regularization aiming at controlling the curvature of the AE and show that if the data is governed by a Lie group, our regularization induces the generalization of the AE in a finite data set regime, Sec.~\ref{sec:secondorderAE}.
\end{itemize}

\section{Background and Notations}
This section provides the required notions regarding the three frameworks this work intertwines.
\label{maso_back}

\subsection{Autoencoder}
An Autoencoder (AE) aims at learning an identity mapping, also known as auto-association \cite{ackley1985learning}, on a given dataset with a bottleneck latent dimension. It has been implemented first for image compression \cite{Cottrell}, speech recognition \cite{elman1988learning}, and dimensionality reduction \cite{baldi1989neural}. It is composed of two nonlinear maps: an encoder, denoted by $\boldsymbol{E}$ and a decoder, denoted by $\boldsymbol{D}$. The encoder maps an input $x \in \mathbb{R}^d$ to a hidden layer of dimension $h<d$, $\boldsymbol{E}(\boldsymbol{x})$, which encodes the salient features in the data \cite{Goodfellow-et-al-2016} and defines its \textit{code} or \text{embedding}. The decoder reconstructs the input from its \textit{code}, thus the entire AE map is defined as $\left (\boldsymbol{D} \circ \boldsymbol{E} \right) (\boldsymbol{x})$ with $\circ$ denoting the composition operator.

The weights of the AE are learned based on some flavors of reconstruction losses, e.g., the mean-square error for real data and the binary cross-entropy for binary data, between the output, $\left (\boldsymbol{D} \circ \boldsymbol{E} \right )(\boldsymbol{x})$, and the input, $\boldsymbol{x}$. To improve generalization, some regularizations can complement the reconstruction loss \cite{srivastava2014dropout} such as favoring sparsity of the \textit{code} \cite{makhzani2013k} or sparsity of the weights \cite{jarrett2009best}. Other types of regularization include injecting noise in the input leading to Denoising AE known to increase the robustness to small input perturbations \cite{vincent2008extracting}. Closer to our work, \cite{rifai2011higher} and \cite{rifai2011contractive} proposed to improve the robustness of the \textit{code} to small input perturbations by penalizing the curvature of the encoder mapping by regularizing the Jacobian as well as the Hessian of $\boldsymbol{E}$.

\subsection{Max Affine Spline Network}
\label{sec:maso}
A DN is an operator $\boldsymbol{f}_\Theta$ with parameters $\Theta$ composing $L$ intermediate {\em layer} mappings $\boldsymbol{f}_{\ell}$, $\ell=1,\dots,L$, that combine affine and simple nonlinear operators such as the {\em fully connected operator}, {\em convolution operator}, {\em activation operator} (applying a scalar nonlinearity such as the ubiquitous ReLU), or {\em pooling operator}. 

A DN employing nonlinearities such as (leaky-)ReLU, absolute value, and max-pooling is a continuous piecewise linear operator and thus lives on a partition $\Omega$ of the input space. As such, the DN's CPA mapping of an input $\boldsymbol{x}$ can be written as 

\begin{equation}
\label{eq:cpa}
    \boldsymbol{f}_{\Theta}(\boldsymbol{x}) = \sum_{\omega \in \Omega} 1_{\{\boldsymbol{x} \in \omega\}}\left(A_{\omega}\boldsymbol{x}+B_{\omega} \right)
\end{equation}
where $1$ defines the indicator function, $A_{\omega}$ and $B_{\omega}$ the per region affine parameters involving the DN per layer affine parameters , $W^{\ell}, \boldsymbol{b}^{\ell} \in \Theta, \forall \ell$, and the nonlinearities state of the region $\omega \in \Omega$ \citep{pmlr-v80-balestriero18b}.
The unit and layer input space partitioning can be rewritten as Power Diagrams, a generalization of Voronoi Diagrams \citep{balestriero2019geometry}; composing layers produce a Power Diagram subdivision.

\subsection{Lie Group Transformation}
\label{sec:learning_group}
The approaches mentioned in the introduction aiming at enhancing the capability of DNs to take into account the symmetries in the data consider only those that are explicitly modeled, s.a., translation, another line of work consists in learning them from the data.
In \citet{rao1999learning,sohl2010unsupervised,hashimoto2017unsupervised,bahroun2019similarity,pmlr-v119-cosentino20a}, they propose methods capable of discovering the symmetry within the data alleviating the need for explicitly defining appropriate equivalence classes for the data. In fact, in a simple computer vision dataset such as MNIST or in a music retrieval dataset such as GTZAN, there is more than translation and rotation to characterize efficiently the data \citep{ick2020learning}. This paragraph is dedicated to the understanding of such approximation methods, which will be an important part of our regularizations.

The approximation of Lie groups has been introduced by \cite{rao1999learning} and later extended in \cite{sohl2010unsupervised,wang2011lie}, and aims at learning the transformation operator underlying the data with the assumption that the dataset is the result of the action of a group on a sample. This framework has an essential place in neuroscience as there is evidence of an underlying network of neurons enabling the detection of a class of equivalence via transformation learning \citep{kriegeskorte2008matching,sengupta2018manifold,bahroun2019similarity}.

In the case of a Lie group, the dataset can be modeled according to the first-order Lie equation
\begin{equation}
\label{lie_equation}
\frac{d \boldsymbol{x}(\theta)}{d \theta} =  G \boldsymbol{x}(\theta),
\end{equation}
where $\boldsymbol{x}(\theta)\in \mathbb{R}^{d}$, $\theta$ is the coefficient governing the amount of transformation, and $G \in \mathbb{R}^{d \times d}$.
This first-order differential equation indicates that the variation of the data is linear with respect to the data and depends on the infinitesimal operator $G \in \mathcal{T}_{I}\mathcal{G}$ where $\mathcal{T}_{I}\mathcal{G}$ denotes the Lie algebra of the group $\mathcal{G}$, i.e., the tangent of the group at the identity element. An introduction to group transformations can be found in \cite{hall2015lie}. The solution of Eq.~\ref{lie_equation} is given by $\boldsymbol{x}(\theta) = \exp (\theta G) \boldsymbol{x}(0)$, a simple example with the group of rotation is given in Appendix~\ref{Lieexample}.

While the learnability of the exponential map is tedious, one can exploit its Taylor series expansion to learn the infinitesimal operator. In fact, for a small $\epsilon$ we have
\begin{align}
\label{eq:approx_lie_first}
    \boldsymbol{x}(\theta +\epsilon)  \approx (I+\epsilon G) \boldsymbol{x}(\theta) 
\end{align}  
The operator $G$ can thus be learned using data that are close to each other as they result from small transformations and thus follow this approximation.
Without this form of supervision, the search for neighbor data is achieved by the nearest neighbor algorithm, as in \cite{hashimoto2017unsupervised}. Note that in our case, we will consider multiple transformations, each parametrized by a $1$-dimensional Lie group, i.e. $\textbf{x}(\theta) = \prod_{k=1}^h \exp(\theta_k G_k) \textbf{x}(0)$, where $\theta \in \mathbb{R}^h$. In that case the first order approximation around the identity element of each group, as Eq.~\ref{eq:approx_lie_first}, becomes $ \boldsymbol{x}(\theta +\epsilon)  \approx (I+\sum_{k=1}^{h}\epsilon_k G_k) \boldsymbol{x}(\theta) $, where $\epsilon \in \mathbb{R}^h$ and with $\epsilon_k$ being the transformation parameter associated to infinitesimal operator $G_k$.

%This approach requires an accurate approximation of the tangent, $\boldsymbol{x}(\theta+\epsilon)-\boldsymbol{x}(\theta)$ which is highly dependent on the dimension of the data, the number of samples, and distribution of the manifold. 

%We alleviate this drawback by proposing an approximating of the group while not considering close samples but close regions defined by a max affine spline operator. 

\section{Max Affine Spline Autoencoders}
%\label{sec:AE_T}
We now leverage the CPA operator defined in Eq.~\ref{eq:cpa} to reformulate AEs and propose to interpret the role of the encoder and decoder (Sec.~\ref{sec:AE_CPA}), derive a necessary condition for the reconstruction of piecewise linear data surface (Sec.~\ref{sec:AE_RECONS}), characterize its per region surface via the Jacobian and approximated Hessian of the CPA operator (Sec.~\ref{sec:tang-hess}), and finally use these findings to analyze commonly used variations of AEs (Sec.~\ref{sec:interp_usual}).

%\subsection{Formulation and Interpretability}
\label{sec:interp}

\begin{figure}[t]
\centering
\begin{minipage}{.45\textwidth}
    \centering
    \includegraphics[width=1\linewidth]{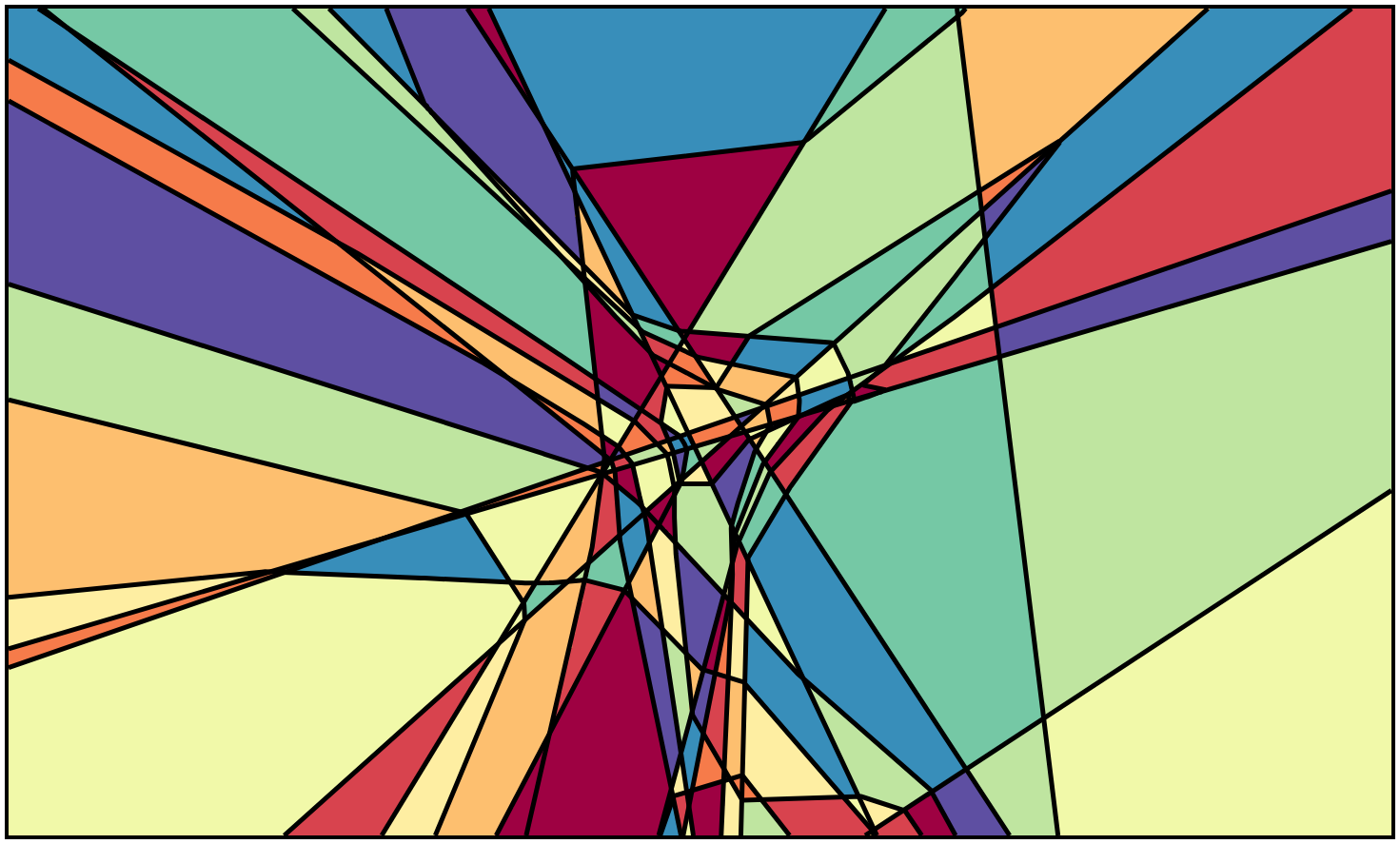}
\end{minipage}
\hspace{1cm}
\begin{minipage}{.45\textwidth}
    \centering
    \includegraphics[width=1\linewidth]{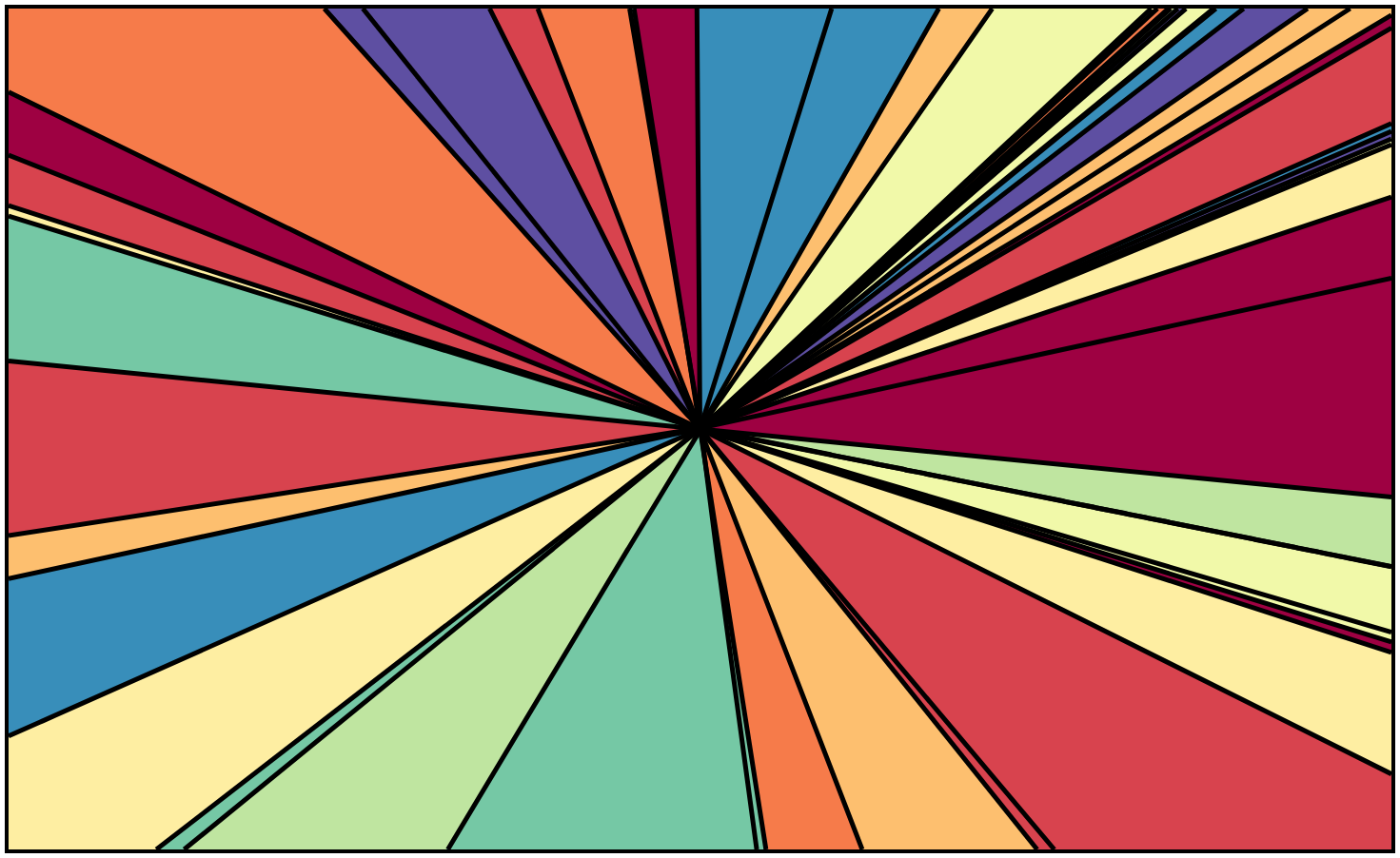}
\end{minipage}
    \caption{2-dimensional visualizations of the input space partitioning $\Omega^{E,D}$ induced by two randomly initialized AEs with bias (\textit{left}) and zero bias (\textit{right}). 
    Each region, depicted by a particular color,  bounded by the black lines has a set of CPA parameters $A^{E}_{\omega}, A^{D}_{\omega}, B^{E}_{\omega}, B^{D}_{\omega}$ described in Eq.~\ref{masoparams} which depend on the per-layer affine parameters as well as the state of the nonlinearities of the region $\omega$. To reconstruct its input, an AE achieves an affine map for each region; its output for a sample of a given region $\omega$ is provided by Eq.~\ref{eq:masoAE}.}
    \label{fig:inputspacepartitioning}
\end{figure}

The output of a CPA DN is formed as per Eq.~\ref{eq:cpa}. An AE composing two CPA functions, the {\em encoder} and the {\em decoder}, the entire mapping remains a CPA with an input space partition and per region affine mappings. Because we can consider an AE as a network or as the composition of two networks, we will consider two different space partitioning. The partition of the input, i.e., data space, induced by the entire AE, and denoted by $\Omega^{E,D}$, as well as the partition of the decoder induced in the latent space, i.e., bottleneck layer, and denoted by $\Omega^D$. Examples of the entire AE partitioning, i.e., $\Omega^{E,D}$, can be visualized in Fig.~\ref{fig:inputspacepartitioning}.

\subsection{Autoencoders as a Continuous Piecewise Affine Mapping}
\label{sec:AE_CPA}
Now, let $\omega \in \Omega^{E,D}$ defines a region induced by the AE partitioning in the input space as described in Sec.~\ref{sec:maso}. 
Given a d-dimensional sample $\boldsymbol{x} \in \omega$, the max affine spline formulation of the AE mapping is defined as
\begin{equation}
\label{eq:masoAE}
 \boldsymbol{D} \circ \boldsymbol{E}(\boldsymbol{x}) = A^{D}_{\omega}A^{E}_{\omega}\boldsymbol{x} + A^{D}_{\omega}B^{E}_{\omega} + B^{D}_{\omega}, 
\end{equation}
where $\circ$ is the composition operator, $A^{D}_{\omega} \in \mathbb{R}^{d \times h}, A^{E}_{\omega} \in \mathbb{R}^{h \times d}, B^{E}_{\omega} \in \mathbb{R}^{h}$ and $B^{D}_{\omega} \in \mathbb{R}^d$ with $d$ being the dimension of the input space and $h$ the bottleneck dimension. 

The mapping from these global parameters to the per-layer ones is performed as follows.
First, we denote by $W^{\ell} \in \mathbb{R}^{d_{\ell} \times d_{\ell-1}},\boldsymbol{b}^{\ell} \in \mathbb{R}^{d_{\ell}}$ the affine parameters of each layer, where $\ell \in \left \{1,\dots,L\right \}$ defines the encoder indexes and $\ell \in \left \{L+1,\dots,L+P \right \}$ the decoder ones (with structure depending on the layer type), where $L$ denotes the number of encoder layers, $P$ the number of decoder layers, $d_{\ell-1}$ the input dimension of the layer $\ell$ and $d_{\ell}$ its output dimension. We have that $d_{L}=h$ the bottleneck dimension, $d_{0}=d_{L+P}=d$ the input and output dimension. Then, we also denote by $Q^{\ell}$ the diagonal matrices encoding the region induced states of the nonlinearities, $(0,1)$ for ReLU, $(-1,1)$ for absolute value. Finally, the parameters of the max affine spline AE formulation described in Eq.~\ref{eq:masoAE} are defined as
\begin{align}
\label{masoparams}
    A^{E}_{\omega} =& W^{L}Q_{\omega}^{L-1}W^{L-1}\dots Q_{\omega}^{1}W^{1}\;\;\text{ and }\;\;
    B^E_{\omega} = \boldsymbol{b}^{L}+\sum_{i=1}^{L-1}W^{L}Q_{\omega}^{L-1}W^{L-1}\dots Q_{\omega}^{i}\boldsymbol{b}^{i}.
\end{align}
$A^{D}_{\omega}$ and $A^{D}_{\omega}$ are defined similarly with $\ell \in \left \{L+1,\dots,L+P \right \}$.
Therefore, there is a direct mapping from the intuitive piecewise affine parameterization of the network to the per-layer parametrization as it is commonly used in the literature. 

Given these analytical maps, we now provide insights into the AE approximation. Let's rewrite Eq.~\ref{eq:masoAE} as
\begin{align}
\label{eq:proj}
\boldsymbol{D} \circ \boldsymbol{E}(\boldsymbol{x}) & = \sum_{k=1}^{h} \left \langle  \textbf{a}^{E^{T}}_{k}[\omega],\boldsymbol{x} \right \rangle \textbf{a}^{D}_{k}[\omega] + B^{E,D}_{\omega}  = A^{D}_{\omega} \boldsymbol{\mu}_{\boldsymbol{x}} + B^{E,D}_{\omega},
\end{align}
where $B^{E,D}_{\omega} = A^{D}_{\omega}B^{E}_{\omega}+B^{D}_{\omega}$, $\textbf{a}^{E^T}_{k}[\omega]$ are the rows of $A^{E}_{\omega}$, $\textbf{a}^{D}_{k}[\omega]$ are the columns of $A^{D}_{\omega}$. This is the shifted mapping of $\boldsymbol{x}$ onto the subspace spanned by $A^{D}_{\omega}$ and with coordinates driven by $A^{E}_{\omega}$. 

From Eq.~\ref{eq:proj}, we deduce the per region role of the encoder and decoder. The samples of each region $\omega \in \Omega^{E,D}$, are expressed in the basis defined by the decoder region-dependent parameter $A^{D}_{\omega}$, i.e., the per region parametric representation of the approximated manifold, and the coordinates of this sample in such a basis are induced by the region-dependent parameter $A^{E}_{\omega}$, the whole mapping is then shifted according to both the encoder and decoder CPA parameters.

\subsection{Reconstruction Guarantees}
\label{sec:AE_RECONS}
We now derive a necessary condition on the CPA parameters, $A^{D}_{\omega}, A^{E}_{\omega}$, such that the AE achieves perfect reconstruction on a given continuous piecewise linear surface in the case of zero bias as often used in practice \cite{mohan2020robust}.
\begin{prop}
\label{prop_2}
A necessary condition for the zero-bias AE to reconstruct a continuous piecewise linear data surface is to be bi-orthogonal as per
$
    \forall \boldsymbol{x} \in \omega, \;\; \boldsymbol{D} \circ \boldsymbol{E}(\boldsymbol{x}) = \boldsymbol{x} \implies \left \langle \boldsymbol{a}^{D}_k[\omega], \boldsymbol{a}^{E}_{k'}[\omega] \right \rangle = 1_{ \left \{k=k' \right \}}.
$
(Proof in Appendix~\ref{proof:prop2}.)
\end{prop}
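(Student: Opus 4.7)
The plan is to reduce the identity $\boldsymbol{D}\circ\boldsymbol{E}(\boldsymbol{x}) = \boldsymbol{x}$ on a region $\omega$ to a linear-algebraic condition on the outer-product expansion of $A^D_\omega A^E_\omega$, and then to exploit the rank-$h$ bottleneck to force the rows of $A^E_\omega$ to be dual to the columns of $A^D_\omega$.

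First, under the zero-bias assumption every per-layer bias $\boldsymbol{b}^{\ell}$ vanishes, so the formulas in Eq.~\ref{masoparams} give $B^E_\omega = 0$ and, symmetrically, $B^D_\omega = 0$. Consequently Eq.~\ref{eq:masoAE} reduces to $\boldsymbol{D}\circ\boldsymbol{E}(\boldsymbol{x}) = A^D_\omega A^E_\omega \boldsymbol{x}$, and I will immediately rewrite the rank-$h$ product as the outer-product sum $A^D_\omega A^E_\omega = \sum_{k=1}^h \boldsymbol{a}^D_k[\omega]\,(\boldsymbol{a}^E_k[\omega])^{T}$, which matches the expansion already used in Eq.~\ref{eq:proj}.

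Next, the reconstruction requirement on the piecewise linear data surface inside $\omega$ means that $A^D_\omega A^E_\omega \boldsymbol{x} = \boldsymbol{x}$ must hold for every $\boldsymbol{x}$ on the linear piece of the surface contained in $\omega$. Since the range of $A^D_\omega A^E_\omega$ sits inside the column span of $A^D_\omega$, which has dimension at most $h$, the linear piece must be contained in $\mathrm{col}(A^D_\omega)$; and because the piece is taken to be of the maximal $h$-dimensional shape that a width-$h$ bottleneck can faithfully encode, the piece actually coincides with $\mathrm{col}(A^D_\omega)$. In particular, $\boldsymbol{a}^D_1[\omega],\ldots,\boldsymbol{a}^D_h[\omega]$ are linearly independent and form a basis of that piece. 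I will then parametrize any $\boldsymbol{x}$ in the piece as $\boldsymbol{x} = \sum_{k'=1}^h c_{k'}\boldsymbol{a}^D_{k'}[\omega]$, substitute into the outer-product expansion, and equate coefficients against this unique representation using linear independence of the columns of $A^D_\omega$. This yields $\sum_{k'} c_{k'}\langle \boldsymbol{a}^E_k[\omega], \boldsymbol{a}^D_{k'}[\omega]\rangle = c_k$ for every $k$ and every choice of $(c_{k'})$, from which the bi-orthogonality $\langle \boldsymbol{a}^D_k[\omega], \boldsymbol{a}^E_{k'}[\omega]\rangle = 1_{\{k=k'\}}$ follows by specializing to $c_{k'}=\delta_{k'\ell}$.

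The main obstacle is conceptual rather than computational: one has to justify that the linear piece of the surface indeed spans a full $h$-dimensional subspace and occupies a set with nonempty relative interior inside $\omega$, so that the pointwise reconstruction identity extends by linearity to the entire subspace and the coefficients $c_{k'}$ can be varied freely. Once that dimensionality hypothesis is articulated, the remainder of the argument is a direct bookkeeping step built on the linear independence of $\{\boldsymbol{a}^D_k[\omega]\}_{k=1}^h$.
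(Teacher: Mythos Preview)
Your argument is correct and reaches the bi-orthogonality conclusion, but it proceeds along a different line from the paper. The paper uses an \emph{idempotence} step: from $x = \sum_k \langle x, a^E_k[\omega]\rangle\, a^D_k[\omega]$ it substitutes the right-hand side back in for $x$ inside the inner product, obtaining (in matrix form) $A^D_\omega A^E_\omega x = A^D_\omega (A^E_\omega A^D_\omega) A^E_\omega x$, and then invokes injectivity of $A^D_\omega A^E_\omega$ on the region to strip away the outer factors and conclude $A^E_\omega A^D_\omega = I_h$. You instead parametrize $x$ directly in the basis $\{a^D_k[\omega]\}$ and read off the bi-orthogonality by matching coefficients and specializing the free parameters. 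Your route is more elementary and, importantly, more explicit about the hidden hypotheses: you spell out that the linear piece must be $h$-dimensional with nonempty relative interior and that the columns of $A^D_\omega$ must be independent, whereas the paper buries exactly these requirements in the single phrase ``injective on the region.'' The paper's idempotence trick is a bit more compact but relies on the same unwritten assumptions; your final paragraph correctly identifies that this dimensionality point is the only nontrivial ingredient in either version.
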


That is, if a continuous piecewise linear surface is correctly approximated, we know that the parameters of the MAS operator describing the encoder and decoder will be bi-orthogonal, i.e., the column vectors of $A^{D}_{\omega}$ and the row vectors of $A^{E}_{\omega}$ form a bi-orthogonal basis.

We now propose to give intuitions regarding this condition by utilizing the mapping between CPA parameters and layer weights as per Eq.~\ref{masoparams}. In fact, the following corollary provides the conditions for the bi-orthogonality to be fulfilled depending on the weights of the autoencoder, i.e., $W^{\ell}$. For the sake of clarity, we consider the case of a $2$-layer ReLU AE. 
\begin{cor}
\label{cor:corrolary_reconst}
Let $\boldsymbol{E}$ and $\boldsymbol{D}$ be a $2$-layer ReLU network with respective weights $W^{1} \in \mathbb{R}^{h \times n} $ and $W^{2}\in \mathbb{R}^{n \times h}$, as per Eq.~\ref{masoparams}. We denote by $W^1_{i,j}$ the $i^{th}$ row and $j^{th}$ column of the weight matrix $W^{1}$. Now, $\forall x \in \mathcal{X}$, a necessary condition for bi-orthogonality is that, for each $k,k' \in \left \{1,\dots,h \right \}$, one of the following is fulfilled:
\begin{enumerate}[label=(\roman*)]
  \setlength{\itemsep}{0pt}
    \item $W^{1^T}_{k',.} x \leq 0. $
    \item $\forall i \in \left \{1,\dots,d \right \}, \;\;\; W^{2^T}_{i,.} \boldsymbol{E}(x) \leq 0. $
    \item $\forall i \in \left \{1,\dots,d \right \}, \;\;\; W^{2^T}_{i,.} \boldsymbol{E}(x) >0$ and $\left \langle W^{2}_{.,k}, W^{1}_{k',.} \right \rangle = 0.$
    \item $\sum_{i=1}^{d} W^{2}_{i,k} W^{1}_{k',i} 1_{ \left \{ W^{2^T}_{i,.} \boldsymbol{E}(x) >0 \right \}}=0.$
\end{enumerate}
(Proof in Appendix~\ref{proof:corrolary1}.)
\end{cor}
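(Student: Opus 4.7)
My plan is to unfold the bi-orthogonality condition from Proposition~\ref{prop_2} using the explicit two-layer CPA parameterization and then enumerate the configurations that can satisfy it. First I would specialize Eq.~\ref{masoparams} to this regime: within a region $\omega$, write $A^E_\omega = P_\omega W^1$ and $A^D_\omega = Q_\omega W^2$, where $P_\omega \in \{0,1\}^{h \times h}$ and $Q_\omega \in \{0,1\}^{n \times n}$ are the diagonal ReLU-state matrices with $P_\omega[k',k'] = 1_{\{W^1_{k',:}\,x > 0\}}$ and $Q_\omega[i,i] = 1_{\{W^2_{i,:}\,\boldsymbol{E}(x) > 0\}}$. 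Reading off the relevant row and column then gives $(\boldsymbol{a}^E_{k'}[\omega])_j = P_\omega[k',k']\,W^1_{k',j}$ and $(\boldsymbol{a}^D_k[\omega])_i = Q_\omega[i,i]\,W^2_{i,k}$, and the bi-orthogonality inner product factorises as
\[
\langle \boldsymbol{a}^D_k[\omega],\, \boldsymbol{a}^E_{k'}[\omega] \rangle = P_\omega[k',k']\, \sum_{i=1}^{n} Q_\omega[i,i]\, W^2_{i,k}\, W^1_{k',i}.
\]

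This factorisation is the heart of the argument: for $k \neq k'$, Proposition~\ref{prop_2} forces the product to vanish, which can happen only if either the prefactor $P_\omega[k',k']$ is zero or the weighted sum is zero. The first alternative is exactly clause $(i)$. The second alternative is clause $(iv)$ in its general form, and the two highlighted sub-regimes fall out by specialising the pattern of decoder activations: the empty-active-set configuration $Q_\omega = 0$ makes every summand vanish and gives $(ii)$, while the full-active-set configuration $Q_\omega = I$ collapses the sum to $\langle W^2_{:,k}, W^1_{k',:} \rangle$ and gives $(iii)$. Thus the four clauses together exhaust the possible configurations of encoder/decoder activation states under which the off-diagonal inner product can be zero.

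The main piece of care this proof demands is notational rather than conceptual: one has to track rows versus columns of $W^1$ and $W^2$, translate pre-activation inequalities cleanly into the binary diagonal states of $P_\omega$ and $Q_\omega$, and observe that $(ii)$ and $(iii)$ are really boundary specialisations of $(iv)$ rather than independent regimes. The on-diagonal requirement $\langle \boldsymbol{a}^D_k, \boldsymbol{a}^E_k \rangle = 1$ is produced by the very same factorisation but now demands a strictly positive value; this rules out clauses $(i)$ and $(ii)$ along the diagonal and is implicitly handled as a separate normalisation constraint on top of the off-diagonal case analysis above.
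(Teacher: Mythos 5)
Your proposal is correct and follows essentially the same route as the paper's proof: you compute the per-region spline parameters via the diagonal ReLU-state matrices, obtain the same factorisation $\langle \boldsymbol{a}^{D}_k[\omega], \boldsymbol{a}^{E}_{k'}[\omega]\rangle = 1_{\{W^{1^T}_{k',.}x>0\}}\sum_{i} W^{2}_{i,k}W^{1}_{k',i}1_{\{W^{2^T}_{i,.}\boldsymbol{E}(x)>0\}}$, and read the four clauses off as the ways this product can vanish, with $(ii)$ and $(iii)$ as specialisations of $(iv)$. Your additional remarks on restricting the case analysis to $k\neq k'$ and treating the diagonal normalisation separately make explicit what the paper leaves implicit, but do not change the argument.
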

This results shows that the bi-orthogonality condition can be obtained via a combination of orthogonality conditions between the weights and/or nonlinearity activations. 

For instance, the proposition $(i)$ corresponds to the case where the input of the $k'$ unit in the bottleneck layer is negative, condition $(ii)$ is the case where the input of all output units is negative, condition $(iii)$ corresponds to a linear decoder and orthogonality of the weights, and $(iv)$ corresponds to an orthogonality condition between the $k^{th}$ column of the decoder weight with the $k'^{th}$ row of the encoder weight modulo the activations of the decoder layer. Note that if $(ii)$ and $(iii)$ hold for multiple regions $\omega \in \Omega^{E,D}$ it implies that the decoder is linear with respect to the coordinate space and forms a linear manifold. Thus, these are not realistic conditions to have efficient AEs.

\begin{figure}
    \centering
    \hspace{1.2cm}
    \begin{minipage}{0.45\linewidth}
    \includegraphics[width=.8\linewidth]{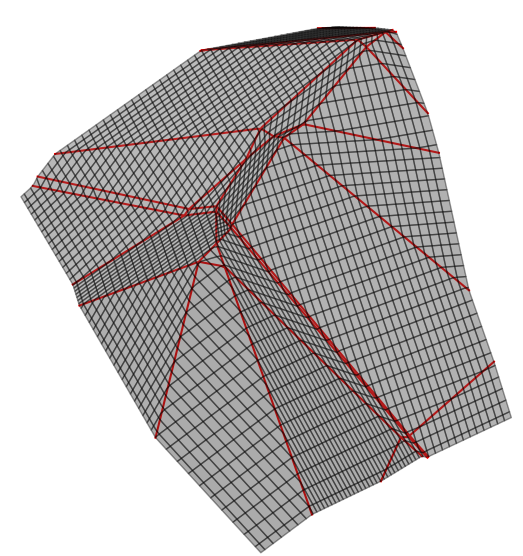}
    \end{minipage}
    \begin{minipage}{0.45\linewidth}
    \includegraphics[width=.8\linewidth]{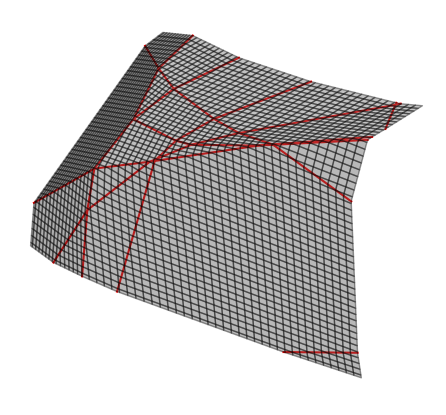}
    \end{minipage}
    \caption{Piecewise linear surfaces induced by two randomly initialized AE decoders and visualized in the ambient space of dimension $d=3$ (latent dimension being $h=2$). The gray denotes the regions, and the red lines their borders. As they correspond to the MAS surface induced by the decoder, each gray region has a slope characterized by the Jacobien of the decoder as in Eq.~\ref{eq:jaco_D}. Our work aims at developing a constraint on these surfaces via their per region tangent, such that they approximate the manifold defined by the orbit of a signal with respect to the action of a group.}
    \label{fig:surface}
\end{figure}
 
\subsection{Tangents and Hessian}
\label{sec:tang-hess}
From the CPA formulation, we observed that for each region $\omega \in \Omega^{E,D}$, $\boldsymbol{D} \circ \boldsymbol{E}$ defines a composition of two continuous piecewise affine functions, each defined respectively by the parameters $A^{E}_{\omega}$, $ B^E_{\omega}$, and $A^{D}_{\omega}$, $ B^D_{\omega}$. We can thus derive simple analytical formulas for the per region Jacobian and approximated Hessian of the AE.

The Jacobian of the AE for a given region $\omega \in \Omega^{E,D}$ is given by
\begin{align}
\label{eq:jaco}
 J_{\omega}[\boldsymbol{D} \circ \boldsymbol{E}] = A^{D}_{\omega} A^{E}_{\omega}.
\end{align}
More details regarding the Jacobian are given in Appendix~\ref{ap:jaco}.
It is also clear that the rank of the Jacobian is upper bounded by the latent dimension as $\text{rank}(J_{\omega}[\boldsymbol{D} \circ \boldsymbol{E}]) \leq h$, where $h$ is the number of units of the bottleneck layer of the AE, and in general by the $\min_{\ell} d_{\ell}$.
This dimension is directly related to the manifold's dimension that one aims to approximate, assuming that all other layer widths are larger than $h$.

One can similarly obtain the per region tangent of the decoder, as it defines the per region parametric representation of the manifold, see Fig.~\ref{fig:surface}. We recall that we denote by $\Omega^{D}$ the partition of the latent space induced by the decoder
\begin{align}
\label{eq:jaco_D}
 \forall \omega \in \Omega^D, J_{\omega}[\boldsymbol{D}] = A^{D}_{\omega},
\end{align}
where the columns of $A^{D}_{\omega}$ form the basis of the tangent space induced by $\boldsymbol{D}$. 

The characterization of the curvature of the approximation of the data manifold can be done using the per region Hessian defined by $H_{\omega}, \forall \omega \in \Omega^{D}$, which in our case will be defined as the sum of the difference of neighboring tangent planes.
\begin{equation}
    \label{eq:Hessian}
    \forall \omega \in \Omega^D, \left \| H_{\omega} \right \|_{F} = \sum_{\omega' \in \mathcal{N}(\omega)}   \left \| J_{\omega}[\boldsymbol{D}] - J_{\omega'}[\boldsymbol{D}] \right \|_F,
\end{equation}
where $\mathcal{N}(\omega)$ denotes the set of neighbors of region $\omega$ and $\left \| . \right \|_{F}$ is the Frobenius norm. This approach is based on the derivation described in \citet{rifai2011higher}. In practice, we use a stochastic approximation of the sum by generating a small mini-batch of a few corrupted samples which induce neighboring regions.

\subsection{Interpretability of Regularization Techniques}
\label{sec:interp_usual}
We are now interested in leveraging these findings to analyze and interpret common AE regularizations.
\begin{enumerate}[label=(\roman*),leftmargin=*]
\item \textbf{Higher-Order Contractive AE} \citep{rifai2011higher}: This regularization penalizes the energy of the first and approximated second derivative the encoder map for any region containing a training sample, i.e., $\left \|A^{E}_{\omega} \right \|_{F}$ and $ \sum_{\omega' \in \mathcal{N}(\omega)} \left \|A^{E}_{\omega}- A^{E}_{\omega'} \right \|_{F}$. In the case of a ReLU AE, we know from Eq.~\ref{eq:masoAE} and the submultiplicativity of the Frobenius norm that the norm of the Jacobian is upper-bounded by $\left \|W^{L} \right \|_{F} \times \dots \times \left \| W^{1} \right \|_{F}$. Therefore adding a weight-decay penalty on the encoder weights induces the first-order contractive AE. The second-order induces the curvature of the piecewise linear map $A^{E}$ to be small. Note that it is the per-region affine map induced by the encoder that is regularized, and that it depends on the region's activation codes, i.e., $ Q^{i} $ and $W^{i} \; \; \forall i \in \left \{1,\dots,L \right \}$. Thus,  if two neighboring regions have only have few changes in their code, and that the associated weights are small, then, such a constraint does not affect the overall curvature. On the other hand, if between two regions, the code of a unit having a weight with large amplitude does not change, then the regularization does not affect the curvature either, see details with a toy example in Appendix~\ref{sec:hoc}.

%of the Jacobian of the encoder, that is $A^E$ or the Hessian energy defined by $A^{E}_{\omega}-A^{E}_{\omega'}$ for $\omega'$ in the neighborhood of $\omega$. We know from the per region affine map described above that it is the Jacobian of the decoder that induces the tangent of the AE, and thus aiming at approximating the data manifold. As such, the use of this penalization implies that either the energy loss in the $\boldsymbol{\mu}_x$ is compensated by a larger norm decoder $A^D_{\omega}$ and/or forces the AE to rely on the bias term of Eq.~(\ref{eq:proj}). 
\item \textbf{Denoising AE} \citep{vincent2008extracting}:  Denoising AE is known to have a similar effect than the weight-decay penalty on the DN architecture \cite{wager2013dropout}. A penalty on the energy of $W^{\ell}$ induces a penalty on the energy of the $A^{E}_{\omega}$ and $A^{D}_{\omega}$ $,\forall \omega \in \Omega^{E,D}$. Therefore, it constrains each piece's slope to be as flat a possible, implying that the piecewise linear map focuses on approximating the low-frequency content in the data, which reinforces the learning bias of deep networks towards low-frequency information \citep{rahaman2018spectral}. Thus, we see how denoising and Higher-Order Contractive are tied together.
\iffalse
\item \textbf{Weight-Sharing AE} \citep{teng2019invertible}: In the case of weight-sharing between the decoder and encoder, we have that the $W^{\ell}$ of the decoder are the transpose of the $W^{\ell}$ of the decoder, which implies that the parameters, $A^{E}_{\omega}$ and $A^{D}_{\omega}$ only differ via their matrices encoding the nonlinearity state, $Q^{\ell}$. Furthermore, as the weights are symmetrical, the input space partition of the encoder and the decoder are fully tied together; one can not adapt one without altering the other (without considering degenerate weights).
\fi
\end{enumerate}
\begin{figure}[t!]
    \centering
    \begin{minipage}{0.03\linewidth}
    \rotatebox{90}{\# samples in ball}
    \end{minipage}
    \begin{minipage}{0.398\linewidth}
    \includegraphics[width=1\linewidth]{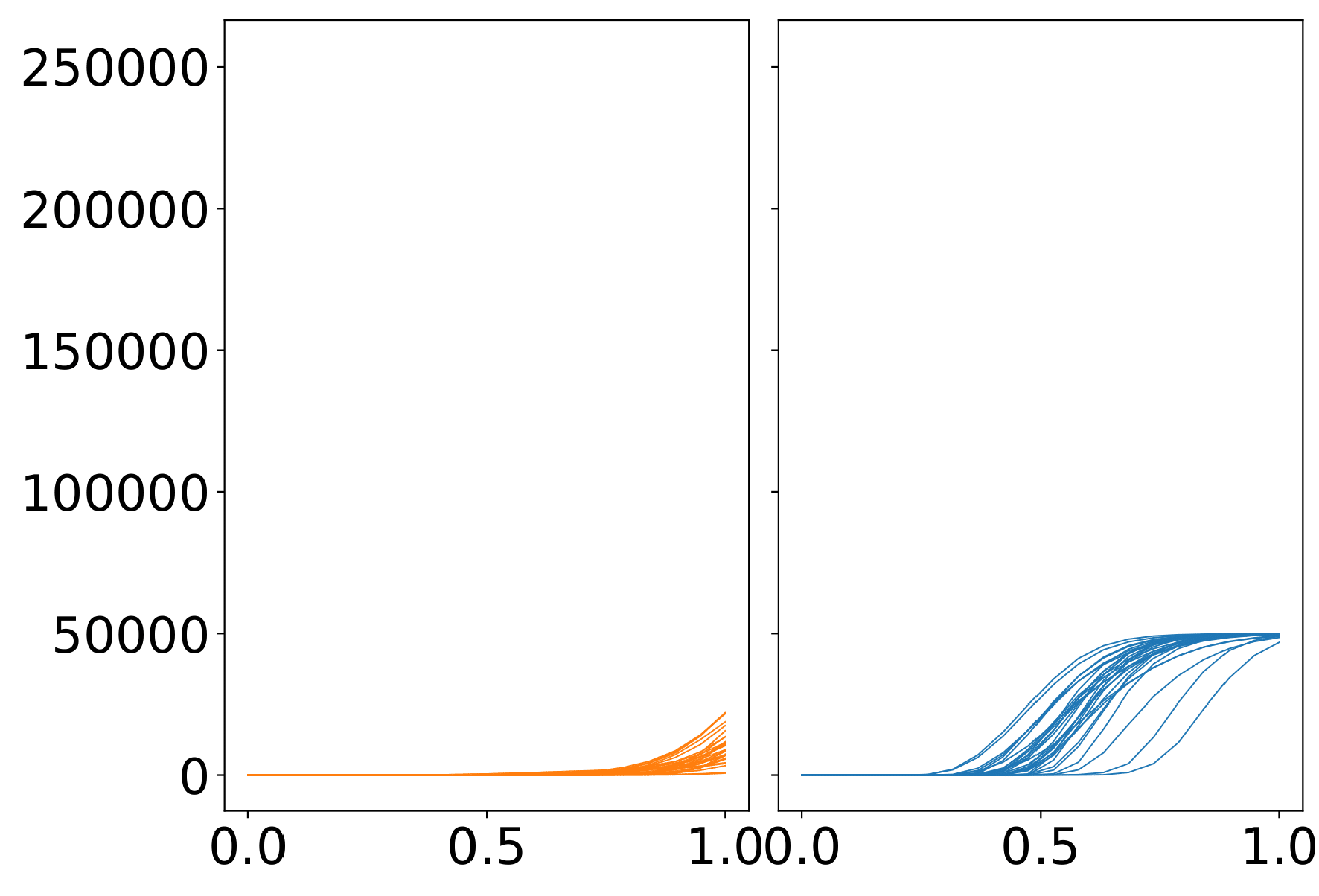}
    \end{minipage}
    \begin{minipage}{0.03\linewidth}
    \rotatebox{90}{\# regions in ball}
    \end{minipage}
    \begin{minipage}{0.503\linewidth}
    \includegraphics[width=1\linewidth]{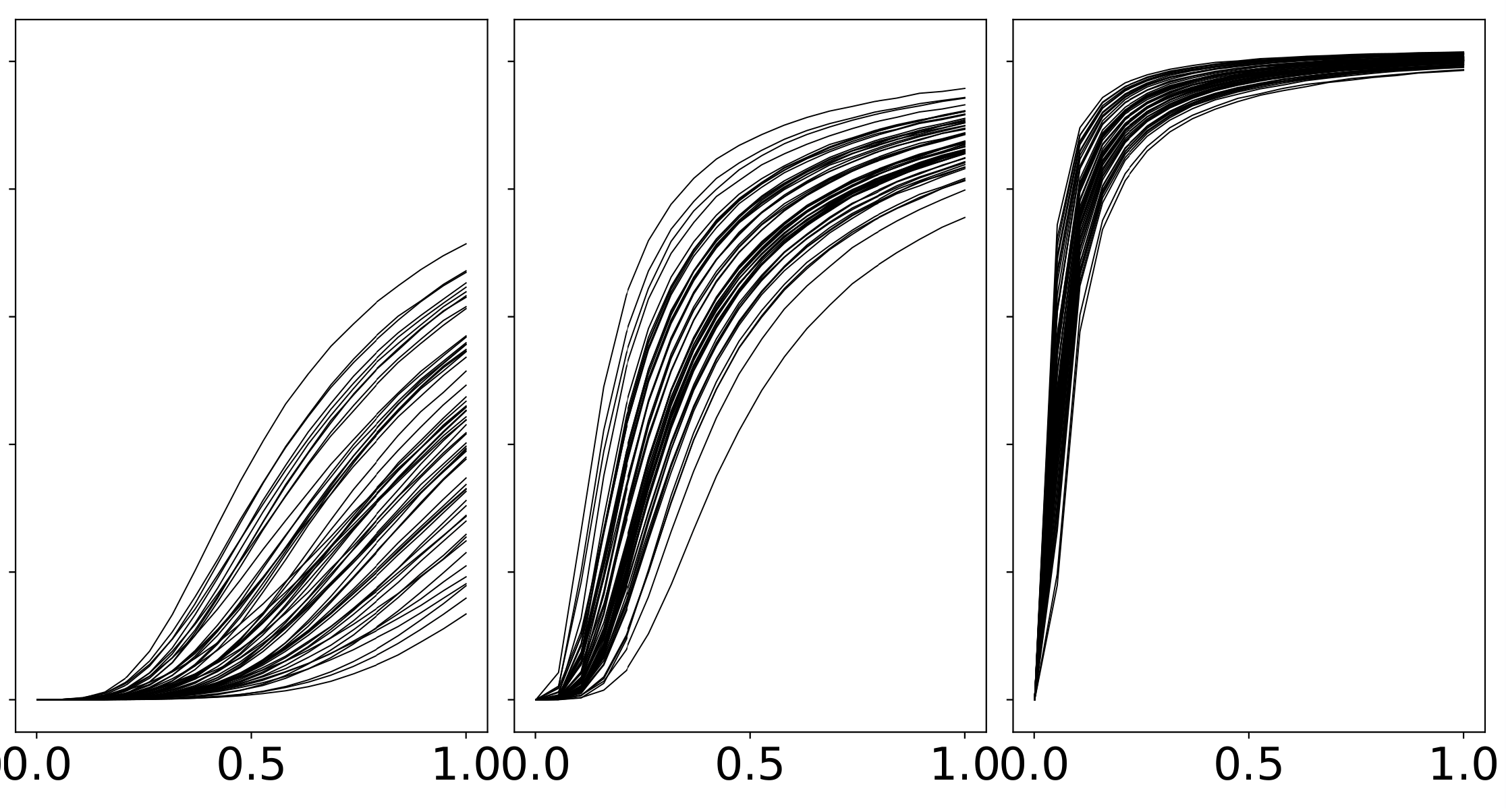}
    \end{minipage}
    \\
    \hspace{1.6cm}
    \begin{minipage}{0.17\linewidth}
    \centering
    radius
    \end{minipage}
    \begin{minipage}{0.17\linewidth}
    \centering
    radius
    \end{minipage}
    \begin{minipage}{0.17\linewidth}
    \centering
    radius
    \end{minipage}
    \begin{minipage}{0.17\linewidth}
    \centering
    radius
    \end{minipage}
    \begin{minipage}{0.17\linewidth}
    \centering
    radius
    \end{minipage}
    \caption{The first and second figures (from left to right) represent the number of data points inside a ball of growing radius (\textit{first} to \textit{second}: CIFAR10, MNIST). From the third to the last figure (from left to right), we show the number of regions in the latent space of the AE inside the same ball of growing radius for different AE architectures (\textit{third} to \textit{fifth}: Small MLP, Large MLP, Convolutional). 
    We observe that the number of regions induced by the AE partitioning of any DN architecture in any randomly sampled ball is much larger than the number of data for any radius.} 
        \label{fig:number_regions}
\end{figure}
\indent
Now that we understand autoencoders' different components and their underlying functionality, we propose to constrain the surface's geometry spanned by the CPA map.
In fact, we can see in Fig.~\ref{fig:number_regions} that for a given ball positioned in the input data space, the number of regions induced by the AE is much larger than the number of data. It is then clear that only a few of the regions contain data points. Thus, besides the implicit constraints of Deep Network, such as weight sharing on convolutional nets, and the continuity constraints of the mapping, there are no other structural constraints on the behavior of regions where no training data are available \citep{gamba2020hyperplane,ergen2020convex}. {\em  There is, therefore, a need to constrain all the regions of the CPA to guarantee the generalization capability of AEs.} The next sections aim at deriving a regularization that imposes a global structure on this mapping, which is effective even where no data points are available.

\section{Generalization Via Lie Group Regularizations}
For the remaining of the paper, we model the dataset as the orbit of a Lie group, that is, as per Eq.~\ref{lie_equation},  $\boldsymbol{x}(\theta) = \exp(\theta G)\boldsymbol{x}(0),  \theta \in \mathbb{R}, G \in \mathcal{T}_{I}\mathcal{G}$, where $\mathcal{T}_{I}\mathcal{G}$ denotes the Lie algebra of the group $\mathcal{G}$. We also assume that $\forall \theta \in \mathbb{R}$, $x(\theta) \in \mathbb{R}^d \setminus \left \{0 \right \}$ to avoid degenerated cases. Our aim is to provide a regularization that leads to generalization guarantees, i.e., the AE is equal to $\textbf{x}$ at any location of the manifold. In Sec.~\ref{sec:secondorderreg}, we first provide such a regularization from a general point of view, that is, we consider the approximation of $\textbf{x}$ by a smooth interpolation function ($C^2(\mathbb{R},\mathbb{R}^d)$). We then translate this condition for CPA operators (Sec.~\ref{sec:secondorderAE}) to apply it to any AE. We then demonstrate the generalization guarantees it yields (Sec.~\ref{sec:guar}).

\subsection{Approximation of Lie Group's Orbit}
\label{sec:secondorderreg}
First, we want to understand under which condition a interpolation function $\boldsymbol{f} \in C^{2}(\mathbb{R}, \mathbb{R}^d)$ coincides with the orbit of $\boldsymbol{x}(0) \in \mathbb{R}^d \setminus \left \{0 \right \}$ under the action of the group $\mathcal{G}$.
In particular, we propose to exploit a regularization that induces an orbit of a Lie group, such as
\begin{equation}
\label{eq:reg}
     \mathcal{R}_{k}(\boldsymbol{f}) \triangleq \int \left \| \frac{d^{k}\boldsymbol{f}(\theta)}{d \theta^{k}} -G\frac{d^{k-1}\boldsymbol{f}(\theta)}{d \theta^{k-1}} \right \| d\theta, %, \forall \theta \in \mathbb{R}
\end{equation}
where $\frac{d^k\boldsymbol{f}(\theta)}{d \theta^k}$ denotes the $k^{th}$ order derivative of $\boldsymbol{f}$.

%We show that if $\frac{d^2\boldsymbol{f}(\theta)}{d \theta^2} =G\frac{d\boldsymbol{f}(\theta)}{d \theta}$ for a given infinitesimal operator $G \in \mathcal{T}_{I}\mathcal{G}$, then under which condition $\boldsymbol{f}$ will coincide with the orbit of the group $\mathcal{G}$ with respect to the sample data $\boldsymbol{x}(0)$, in other words, what are the conditions that enable the approximation of $\boldsymbol{x}(\theta), \forall \theta \in \mathbb{R}$.
%Therefore,  given an infinitesimal operator $G \in \mathcal{T}_{I}\mathcal{G}$, a function $f$ governed by $\frac{d^{2} f(\theta)}{d \theta^2} = G \frac{df(\theta)}{d\theta}$ will in fact describe the orbit of an initial datum $x(0)$ with respect to the group $\mathcal{G}$. 

This regularization constrains $\boldsymbol{f}$ such that its $k^{th}$ order derivative is a linear map of the $k-1$ order. 
In the following theorem, we show that, for $k \in \left \{1,2 \right \}$, such regularization coupled with an interpolation loss function leads to a perfect approximation of the data manifold $x$. 
That is, $\boldsymbol{f}$ coincides with $\boldsymbol{x}(\theta)= \exp(\theta G)\boldsymbol{x}(0), \forall \theta \in \mathbb{R}$ if and only if $\frac{d^{k} \boldsymbol{f}(\theta)}{d \theta^{k}} =  G \frac{d^{k-1} \boldsymbol{f}(\theta)}{d \theta^{k-1}}$ and it exists a certain number of $\theta_i$, depending on the order $k$, such that $\boldsymbol{f}(\theta_i)=\boldsymbol{x}(\theta_i)$.
Note that the restriction to the first two orders is natural as we will apply these results on continuous piecewise affine maps, in which the second-order can only be approximated using stochastic approximation as per Sec.~\ref{sec:tang-hess}.

%\begin{thm}
%\label{prop:oneisenough}
%Assuming $G$ is inversible and that
%\[ \frac{d^2f(\theta)}{d \theta^2} = G\frac{df(\theta)}{d \theta}, \]
%we have that $f(\theta) = x(\theta), \forall \theta \in \mathbb{R}$ if $\exists \theta'$, such that $f(\theta')= x(\theta')$.
%\end{thm}
\begin{thm}
\label{prop:oneisenough}
For all $k \in \left \{1,2 \right \}$, assuming $G$ is invertible, and that a function $\boldsymbol{f}$ minimizes the regularization $\mathcal{R}_{k}(\boldsymbol{f})$ and it exists $\theta_i, i \in \left \{1,\dots,k\right \}$ such that $\boldsymbol{f}(\theta_i) = \boldsymbol{x}(\theta_i)$ then $\boldsymbol{f}$ has perfect generalization as in 
\begin{equation}
    \mathcal{R}_{k}(\boldsymbol{f}) = 0 \; \text{ and } \; \exists \theta_i \in \left \{1,\dots,k\right\} \; \text{ s.t. } \; \boldsymbol{f}(\theta_i)=\boldsymbol{x}(\theta_i) \iff  \forall \theta, \boldsymbol{x}(\theta)=  \boldsymbol{f}(\theta).
\end{equation}
(Proof in Appendix~\ref{proof:cor1}.)
\end{thm}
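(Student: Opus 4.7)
The plan is to prove both directions by reducing $\mathcal{R}_k(\boldsymbol{f})=0$ to a linear ODE and then invoking uniqueness (for $k=1$) or a two-point boundary calculation (for $k=2$).

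The easy direction ($\Leftarrow$) is essentially a verification: assume $\boldsymbol{f}\equiv\boldsymbol{x}$. Differentiating $\boldsymbol{x}(\theta)=\exp(\theta G)\boldsymbol{x}(0)$ gives $\frac{d^{k}\boldsymbol{x}}{d\theta^{k}}=G^{k}\exp(\theta G)\boldsymbol{x}(0)=G\frac{d^{k-1}\boldsymbol{x}}{d\theta^{k-1}}$ for every $k\ge 1$, so the integrand of $\mathcal{R}_k$ vanishes pointwise; the interpolation condition at any $\theta_i$ is trivial.

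For the hard direction, first I would observe that because $\boldsymbol{f}\in C^2$ the integrand in $\mathcal{R}_k$ is continuous, so $\mathcal{R}_k(\boldsymbol{f})=0$ implies $\frac{d^{k}\boldsymbol{f}}{d\theta^{k}}(\theta)=G\frac{d^{k-1}\boldsymbol{f}}{d\theta^{k-1}}(\theta)$ for every $\theta$. For $k=1$, this is the linear ODE $\boldsymbol{f}'=G\boldsymbol{f}$ that also governs $\boldsymbol{x}$; by Picard–Lindelöf, the value at a single point determines the solution on all of $\mathbb{R}$, so the interpolation condition $\boldsymbol{f}(\theta_1)=\boldsymbol{x}(\theta_1)$ forces $\boldsymbol{f}=\boldsymbol{x}$ globally. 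Explicitly, $\boldsymbol{f}(\theta)=\exp((\theta-\theta_1)G)\boldsymbol{f}(\theta_1)=\exp(\theta G)\boldsymbol{x}(0)$.

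For $k=2$, I would first integrate the ODE. Setting $\boldsymbol{g}=\boldsymbol{f}'$, the equation becomes $\boldsymbol{g}'=G\boldsymbol{g}$, so $\boldsymbol{g}(\theta)=\exp(\theta G)\boldsymbol{v}$ for some $\boldsymbol{v}\in\mathbb{R}^d$. Integrating once more and using the invertibility of $G$, every solution has the form
\begin{equation*}
\boldsymbol{f}(\theta)=\exp(\theta G)\boldsymbol{w}+\boldsymbol{u}, \qquad \boldsymbol{w},\boldsymbol{u}\in\mathbb{R}^{d}.
\end{equation*}
Now imposing $\boldsymbol{f}(\theta_{i})=\boldsymbol{x}(\theta_{i})=\exp(\theta_{i}G)\boldsymbol{x}(0)$ for $i=1,2$ and subtracting the two equations yields
\begin{equation*}
\bigl(\exp(\theta_{1}G)-\exp(\theta_{2}G)\bigr)\bigl(\boldsymbol{w}-\boldsymbol{x}(0)\bigr)=0.
\end{equation*}
Factoring as $\exp(\theta_{2}G)\bigl(\exp((\theta_{1}-\theta_{2})G)-I\bigr)$ and using that $\exp(\theta_{2}G)$ is always invertible, the conclusion $\boldsymbol{w}=\boldsymbol{x}(0)$ (and then $\boldsymbol{u}=0$, whence $\boldsymbol{f}=\boldsymbol{x}$) follows as soon as $1$ is not an eigenvalue of $\exp((\theta_{1}-\theta_{2})G)$.

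The main obstacle is handling this last non-degeneracy condition cleanly in the $k=2$ case: the statement as written only asserts the existence of two interpolation points, so I would argue that the set of pairs $(\theta_{1},\theta_{2})$ for which $\exp((\theta_{1}-\theta_{2})G)-I$ is singular is meagre (discrete in $\theta_{1}-\theta_{2}$, since the eigenvalues of $\exp(\tau G)$ are $e^{\tau\lambda_{j}}$ for the eigenvalues $\lambda_{j}$ of $G$), and hence the interpolation conditions can be realized at a pair for which the system is invertible. This discussion, together with the $k=1$ uniqueness argument and the verification of the converse, completes the equivalence.
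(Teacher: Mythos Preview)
Your approach is essentially the same as the paper's: reduce $\mathcal{R}_k(\boldsymbol{f})=0$ to the ODE $\boldsymbol{f}^{(k)}=G\boldsymbol{f}^{(k-1)}$, solve it explicitly using the matrix exponential, and then use the interpolation points to pin down the free constants. For $k=1$ the two arguments are identical.

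For $k=2$ you are in fact more careful than the paper on two counts. First, the paper writes the general solution as $\boldsymbol{f}(\theta)=\exp(\theta G)G^{-1}\boldsymbol{f}'(0)+c\boldsymbol{1}$ with $c\in\mathbb{R}$ a \emph{scalar} multiplying the all-ones vector, whereas you (correctly) allow an arbitrary constant vector $\boldsymbol{u}\in\mathbb{R}^d$. Second, the paper simply asserts that the two interpolation equations ``imply $\boldsymbol{f}'(0)=G\boldsymbol{x}(0)$ and $c=0$'' without justifying the elimination step; you correctly isolate the needed hypothesis that $\exp((\theta_1-\theta_2)G)-I$ be nonsingular. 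Your observation that this can fail (e.g.\ $G$ a rotation generator with $\theta_1-\theta_2\in 2\pi\mathbb{Z}$, where a constant $\boldsymbol{f}$ satisfies $\mathcal{R}_2(\boldsymbol{f})=0$ and matches $\boldsymbol{x}$ at both points) exposes a genuine gap in the statement as written; your genericity remark is the honest way to patch it. So: same route, but your version is the more rigorous one.
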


Thus an interpolant $\boldsymbol{f}$, can approximate the orbit of a Lie group, utilizing two components, the aforementioned regularization with $k \in \left \{1,2\right \}$, and a reconstruction error that force the interpolation function to coincide with $k$ training samples.

\subsection{Lie Group Regularizations for Autoencoders}
\label{sec:secondorderAE}
The derived regularizations were based on a smooth interpolant $\boldsymbol{f}$ and need to be adapted to the case of a CPA map. To do so, there are several crucial considerations:
\begin{enumerate}[label=(\roman*),leftmargin=*]
\item For the sake of clarity, the previous section illustrated the case of a one-dimensional group. Here we propose to generalize such an approach to multiple groups of transformations. We, therefore, consider the case of $h$ infinitesimal operators $G_1,\dots,G_h$ each corresponding to a $1$-dimensional group, as explained in Sec.~\ref{sec:learning_group}.
\item The second-order regularization requires constrains the Hessian of the CPA, which by definition, can only be approximated stochastically as explained in Sec.~\ref{sec:tang-hess}.
\item The assumption on the data is that they are generated by $h$ transformation groups. Thus, the intrinsic dimensionality of the data is at most $h$. Therefore, the size of the bottleneck layer, which corresponds to the maximum dimension of the manifold the autoencoder can generate (as per Sec.~\ref{sec:AE_CPA}), is also $h$.
\end{enumerate}

\paragraph{The case $k=1$:}
\noindent
The first-order regularization corresponds to the assumption that data that are generated by the decoder and that are close to each other result from small transformations of one to another. As per Eq.~\ref{eq:approx_lie_first}, we obtain
\setlength\itemsep{.004em}
\begin{align}
\label{eq:regAE_first}
    \mathcal{R}_{1}(\boldsymbol{D}) & \triangleq \min_{G_1,\dots,G_h}\int_{\mathbb{R}^h} \int_{\mathcal{N}(\theta)}  \min_{\epsilon_1, \dots,\epsilon_h} 
    \left\| \boldsymbol{D}(\theta) - (I+\sum_{k=1}^h \epsilon_k G_k) \boldsymbol{D}(\theta') \right \|_{2} d\theta' d\theta,  
\end{align}
where $\mathcal{N}(\theta)$ denotes the neighborhood of $\theta \in \mathbb{R}^h$, the parameters $\epsilon_1,\dots,\epsilon_h$ are the scalars corresponding to the scale of the transformations, and the $G_1,\dots,G_h$ the infinitesimal operators. The optimal parameters $\epsilon^{\star} = \left [\epsilon_1,\dots,\epsilon_h \right ]^T$ used during the training of the regularized AE are provided in Proposition~\ref{prop:theta_1} in Appendix~\ref{app:params}.
Note that the infinitesimal operators are learned using stochastic gradient descent and that the approximation of the integrals in Eq.~\ref{eq:regAE_first} is developed in Sec.~\ref{sec:exp}.

\paragraph{The case $k=2$:}
\noindent
We know from Sec.~\ref{sec:tang-hess} that for each region $\omega \in \Omega^D$, the decoder is characterized by its tangent plane, $A^{D}_{\omega}$. The second-order regularization imposes that each tangent plane of the AE is related to their neighboring tangents plane by small transformations. Again, considering the linearized exponential maps and exploiting the definition of the Hessian in Eq.~\ref{eq:Hessian} we obtain the following second-order regularization on the CPA
\begin{align}
\label{eq:regAE_second}
    \mathcal{R}_{2}(\boldsymbol{D}) & \triangleq \min_{G_1,\dots,G_h}\int_{\mathbb{R}^h} \int_{\mathcal{N}(\omega)}  \min_{\epsilon_1, \dots,\epsilon_h} 
    \left\| J_{\omega}[\boldsymbol{D}] - (I+\sum_{k=1}^h \epsilon_k G_k) J_{\omega'}[\boldsymbol{D}] \right \|_{F} d \omega' d\omega,  
\end{align}
where $\mathcal{N}(\omega)$ denotes the set of neighbors of region $\omega$ and $\left \| . \right \|_{F}$ is the Frobenius norm and we recall that $ J_{\omega'}[\boldsymbol{D}]= A^{D}_{\omega'} $ and $ J_{\omega}[\boldsymbol{D}] =  A^{D}_{\omega} $. The implementation regarding the sampling of neighboring regions is detailed Sec.~\ref{sec:exp}. In this case also, the optimal parameters $\epsilon^{\star} = \left [\epsilon_1,\dots,\epsilon_h \right ]^T$ used during the training of the regularized AE are provided in Proposition~\ref{prop:theta_2} in Appendix~\ref{app:params}.

Let us now provide interpretations regarding the Lie group regularizations we developed.
While the first-order regularization constrains the AE mapping, the second-order constrains the AE's tangent plane of each region.
In the first-order case the distance between $(I+\sum_{k=1}^h \epsilon_k G_k) \boldsymbol{D}(\theta')$, which corresponds to small transformations of the sample generated by the decoder, and $D(\theta)$ is minimized. Thus, such a regularization constrains the AE mapping to approximate the orbit induced by the infinitesimal generators. Then, the second-order regularization aims at minimizing the distance between $(I+\sum_{k=1}^h \epsilon_k G_k) J_{\omega'}[\boldsymbol{D}]$, which is the small transformation of the tangent plane of region $\omega'$, and $J_{\omega}[\boldsymbol{D}]$. This means that the second-order regularization constrains the Hessian of the decoder, which defines the angle between neighboring piecewise linear maps, to approximating the angle of the data manifold. Therefore, this penalization enforces the curvature of the piecewise linear map to fit the curvature of the orbit. Besides, as opposed to the Higher-Order Contractive AE \citep{rifai2011higher}, these regularizations constrain all the piecewise affine regions whether they contain training data or not as they do not rely on samples from the dataset. This is crucial to provide generalization guarantees in a finite data regime.

\paragraph{Remark:} Note that, because of the first order approximation around the identity of the group, the higher order term containing the information regarding the commutativity of the group are not taken into account in our approach. The non-abelian effects are in fact of the order $O(\epsilon^2)$. Therefore, in this work, no assumptions are made on the commutativity of the group.

%In particular, in the case where $h=1$, we obtain $ \theta^{\star} = \frac{ \langle G\textbf{a}^{D}_{\omega}, \textbf{a}_{\omega}^D - \textbf{a}_{\omega'}^D\rangle}{ \| G \textbf{a}_{\omega}^D\|_2^2}$ which corresponds to the normalized similarity between the second-order term, i.e., $G\textbf{a}_{\omega}^{D}$, and its local approximation, i.e., $\textbf{a}_{\omega}^{D}-\textbf{a}_{\omega'}^{D}$.

\subsection{Approximation Error}
\label{sec:guar}
In Sec.~\ref{sec:secondorderreg}, we showed that if the regularization defined in Eq.~\ref{eq:reg} is equal to zero for any given $k \in \left \{1,2 \right \}$, and if the interpolation function $\boldsymbol{f}$ coincides with the data manifold defined by $\boldsymbol{x}$ on $k$ points, then $\boldsymbol{f}$ coincides with $\boldsymbol{x}$. We now derive the generalization guarantees in the particular case where $\boldsymbol{f}$ is a CPA approximant.

Based on the assumption that $(i)$ a region of the real manifold is correctly approximated, $(ii)$ one of the regularizations defined in Eq.~\ref{eq:regAE_first},\ref{eq:regAE_second} is minimized, and that $(iii)$ the infinitesimal operator $G$ obtained from the regularization coincides with the infinitesimal operator of the group governing the data, we obtain the following bound on the approximation of the data manifold.

\begin{thm}
\label{prop:tang}
If on a region $\omega' \in \Omega^{D}$ the matrix $A^{D}_{\omega'}$ forms a basis of the manifold tangent space on this region, and it exists $k \in \left \{1,2 \right \}$ such that $\mathcal{R}_{k}(\boldsymbol{D})=0$ then for all regions $\omega \in \Omega^{D}$ the basis vectors of $A^{D}_{\omega}$ are the basis vector of the tangent of the data manifold and the distance between the continuous piecewise affine map and the data manifold is upper bounded by the radius of the regions as per
\[ d \left ( \cup_{\omega \in \Omega^{D}} \mathcal{T}_{AE}(\omega), \mathcal{X} \right ) \leq  \sum_{\omega_i \in \Omega^D} \text{Rad}(\omega_i), \]
where $\mathcal{T}_{AE}(\omega)$ the tangent space of the AE for the region $\omega$, $\mathcal{X}$ denotes the data manifold, $d$ defines the 2-norm distance,and $\text{Rad}(\omega_i)$ the radius of the region $\omega_i$.
%If $\exists \omega' \in \Omega^{D}$ such that $A^{D}_{\omega'}$ forms a basis of the manifold tangent space at a position $\boldsymbol{x} \in \omega'$ then, for all region $\omega \in \Omega^{D}$ the basis vectors of $A^{D}_{\omega}$ are the basis vector of the tangent of the data manifold.
%Assume that it exists a region $\omega'$ such that the vectors of $A^{D}_{\omega'}$ from a basis of the tangent space of the manifold at location $\omega'$, $\mathcal{T}_{\mathcal{X}}(\omega')$, then we have that,
%\[ \forall \omega, A^{D}_{\omega} \text{ forms a basis of }  \mathcal{T}_{\mathcal{X}}(\omega).\]
%That is, for each region $\omega$ the basis vectors of $A^{D}_{\omega}$ are the basis vector of the tangent of the data manifold.
(Proof in Appendix~\ref{proof:tang}.)
\end{thm}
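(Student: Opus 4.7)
The plan is to split the argument into the two conclusions of the statement: (i) that $A^{D}_{\omega}$ is a basis of the manifold tangent at every region $\omega\in\Omega^{D}$, and (ii) the distance bound between the union of local tangent spaces and $\mathcal{X}$.

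For (i), I would first exploit the vanishing regularization. When $k=2$, the hypothesis $\mathcal{R}_{2}(\boldsymbol{D})=0$ together with the $\min$ over $\epsilon_1,\dots,\epsilon_h$ in Eq.~\ref{eq:regAE_second} forces, for every pair of neighbouring regions $\omega,\omega'\in\Omega^{D}$, the existence of scalars $\epsilon_k$ such that $A^{D}_{\omega}=\left(I+\sum_{k=1}^{h}\epsilon_{k}G_{k}\right)A^{D}_{\omega'}$. This is exactly the first-order Taylor expansion at the identity of the pushforward by $\prod_k \exp(\epsilon_k G_k)$, which, by the assumption that the learnt $G_k$ coincide with the true infinitesimal generators governing $\mathcal{X}$, transports the manifold tangent basis at one point to the manifold tangent basis at the transformed point. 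Starting from the seed region $\omega'$ where tangent correctness holds by hypothesis, I would propagate the property by induction along paths in the region-adjacency graph of $\Omega^{D}$: since $\Omega^{D}$ partitions a connected latent domain, any region is reachable by a finite chain of neighbour transitions, and the property is preserved at each step. The case $k=1$ reduces to the same inter-region relation by differentiating the first-order Lie equation $\boldsymbol{D}(\theta)=(I+\sum_{k}\epsilon_{k}G_{k})\boldsymbol{D}(\theta')$ with respect to $\theta$, so the same inductive argument applies. This part can also be viewed as the CPA instantiation of Theorem~\ref{prop:oneisenough}, which already guarantees that $\boldsymbol{D}=\boldsymbol{x}$ provided the minimizer of $\mathcal{R}_{k}$ coincides with $\boldsymbol{x}$ at $k$ seed points.

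For (ii), once tangent correctness is established, the restriction of $\boldsymbol{D}$ to any region $\omega$ is an affine map whose Jacobian $A^{D}_{\omega}$ agrees with the true tangent basis at a reference point of $\omega$. Its image $\mathcal{T}_{AE}(\omega)$ is therefore a first-order osculating approximation of the corresponding portion of $\mathcal{X}$, and the pointwise deviation inside the region is controlled by the local Taylor remainder of $\boldsymbol{x}$, which for a convex region of diameter comparable to $\text{Rad}(\omega)$ does not exceed $\text{Rad}(\omega)$ (the higher-order curvature terms are absorbed in this bound). A union-type estimate over all regions of the partition then yields the claimed inequality $d\left(\cup_{\omega}\mathcal{T}_{AE}(\omega),\mathcal{X}\right)\leq \sum_{\omega_i\in\Omega^{D}}\text{Rad}(\omega_i)$.

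The main obstacle will be closing the inductive step of part (i) cleanly: each transition replaces the exact group element $\exp(\epsilon_{k}G_{k})$ by its first-order truncation $I+\epsilon_{k}G_{k}$, and one must argue that the cumulative product of truncations along a path through $\Omega^{D}$ still produces the exact group element that lands on the correct tangent of $\mathcal{X}$. The crucial point is that $\mathcal{R}_{k}=0$ holds \emph{exactly} (not merely approximately), so the $\epsilon_{k}$ at each step are forced to take precisely the values that make the cascade telescope to $\exp\bigl(\sum_{k}\theta_{k}^{\omega}G_{k}\bigr)$ at the terminal region, which is the group element sending $\omega'$ to $\omega$ along the orbit. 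A secondary subtlety, which I would flag at the outset, is interpreting the distance $d(\cdot,\cdot)$ as a Hausdorff-type quantity so that the per-region local estimates can be summed into the global bound stated in the theorem.
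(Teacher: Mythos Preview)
Your proposal follows essentially the same two-step structure as the paper: first propagate tangent correctness from the seed region $\omega'$ to every region via the constraint imposed by $\mathcal{R}_k(\boldsymbol{D})=0$, then bound the per-region deviation by a first-order Taylor/curve argument and sum. The paper's proof is slightly more direct in two places. For part (i) it does not induct along the adjacency graph; it simply observes that the group action $T(\cdot,\theta)=\exp(\theta G)$ is a diffeomorphism on the orbit, so for every $\omega$ there is a $\theta$ with $T(\mathcal{T}_{\mathcal{X}}(\omega'),\theta)=\mathcal{T}_{\mathcal{X}}(\omega)$, and the regularization forces $\mathcal{T}_{AE}(\omega)=T(\mathcal{T}_{AE}(\omega'),\theta)$. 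For the $k=1$ case the paper does not differentiate the decoder relation; it notes that $\frac{d\boldsymbol{f}}{d\theta}=G\boldsymbol{f}$ implies $\frac{d^2\boldsymbol{f}}{d\theta^2}=G\frac{d\boldsymbol{f}}{d\theta}$, reducing $k=1$ to the $k=2$ argument. For part (ii) the paper uses exactly the curve argument you describe: pick $\gamma$ with $\gamma(0)=x$, $\gamma'(0)=h$, bound $d(x+\epsilon' h,\mathcal{X})\le\|\gamma(\epsilon')-\gamma(0)-\epsilon'\gamma'(0)\|$, and sum over regions.

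One caution on the obstacle you flag: your proposed resolution, that exact vanishing of $\mathcal{R}_k$ forces the cascade of $(I+\sum_k\epsilon_k G_k)$ factors to \emph{telescope} to the single group element $\exp\bigl(\sum_k\theta_k^{\omega}G_k\bigr)$, does not hold as stated, since a product of first-order truncations is not in general an exponential. The paper's proof sidesteps this by asserting directly that the regularization connects tangents by the full operator $T$ rather than by its linearization, so it is equally informal on this point; your identification of the gap is accurate, but the telescoping argument would need to be replaced by either the paper's diffeomorphism assertion or a more careful limiting argument.
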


%\begin{cor}
%\label{thm:generalizationbound}
%If $\exists \omega' \in \Omega^{D}$ such that $A^{D}_{\omega'}$ forms a basis of the manifold tangent space at a position $\boldsymbol{x} \in \omega'$ then, 
%\[ d \left ( \cup_{\omega \in \Omega^{D}} \mathcal{T}_{AE}(\omega), \mathcal{X} \right ) \leq  \sum_{\omega_i \in \Omega^D} \text{Rad}(\omega_i), \]
%where $\mathcal{T}_{AE}(\omega)$ the tangent space of the AE for the region $\omega$, $\mathcal{X}$ denotes the data manifold, $d$ defines the 2-norm distance,and $\text{Rad}(\omega_i)$ the radius of the region $\omega_i$.
%(Proof in Appendix~\ref{proof:thmgene}.)
%\end{cor}
The previous statement shows that if the number of pieces of the piecewise affine map, which depends on the number of neurons in the DN architecture (see Fig.~\ref{fig:number_regions} and refer to \citet{montufar2014number} for more details) and the type of nonlinearity, goes to infinity, then the decoder would coincide with the data manifold. 
In a practical setting, it tells us that the higher the number of regions is, the higher is the degrees of freedom of the CPA, and that under this regularization, these degrees of freedom are controlled while not requiring more training points.

\section{Experimental Settings and Validations}
\label{sec:ex}
In this section, we discuss some practical aspects of the proposed regularizations as well as provide the experimental validations. In particular, how the parameters of the regularizations are learned (Sec.~\ref{sec:params}) as well as how the sampling required in both regularizations is performed along with their induced computational complexity (Sec.~\ref{sec:sampling}), and finally the experimental validations (Sec.~\ref{sec:exp}).

\subsection{Parameters Learning}
\label{sec:params}

\begin{table}[t]
\caption{ Comparison of the testing reconstruction errors ($\times 10^{-2} \pm \text{ std } \times 10^{-2})$ for each AE (columns) and dataset (rows). The methods denoted by $\textbf{Lie G. ($1^{th}$)}$ and $\textbf{Lie G. ($2^{nd}$)}$ correspond respectively to the first-order and second-order Lie group regularizations we developed. \textbf{H.O.C. AE} denotes the Higher-Order contractive AE, and \textbf{Den. AE} denoising AE.}
%\begin{adjustbox}{width=\columnwidth,center}
\setlength\tabcolsep{3.7pt}
\begin{tabular}{|l|c|c|c|c|c|}
\hline
\textit{Dataset \: \textbackslash \;  Model} & \textbf{AE}  & \textbf{Den. AE}  & \textbf{H.O.C. AE} & \textbf{Lie G. ($1^{th}$)}& \textbf{Lie G. ($2^{nd}$)} \\\hline \hline
%\textbf{MNIST($2$)} & $13.46  \pm .09 $ & $12.48  \pm 0.07 $ & $ 12.53  \pm 0.13  $ & & $\textbf{12.34}  \pm 0.04   $\\  \hline
%\textbf{MNIST($5$)} & $12.53  \pm 0.07 $ & $11.76  \pm  0.11 $ & $ 12.59  \pm 0.17 $ & & $\textbf{11.47}  \pm 0.09  $\\  \hline
%\textbf{MNIST($7$)} & $13.71  \pm 0.05 $ & $11.17  \pm 0.013 $ & $ 11.38  \pm 0.003  $ & & $\textbf{11.05}  \pm 0.003  $\\  \hline
\textbf{CIFAR10} & $5.6 \pm 0.05 $& $5.0 \pm 0.05$ & - & $\textbf{4.9} \pm 0.07$ & - \\ \hline 
\textbf{MNIST} &$12.01  \pm 0.003$ &   $12.01  \pm 0.004 $ & $12.01  \pm 0.004 $ & $\textbf{6.3} \pm 0.1$ &  $10.13  \pm 0.1  $\\ \hline
\textbf{CBF} & $62.38  \pm 0.74 $ & $52.66  \pm 0.76 $ & $ 51.09  \pm 0.54  $ & $\textbf{43.99} \pm 1.2$& $ 49.73  \pm 0.31   $\\  \hline
\textbf{Yoga} & $33.76  \pm 0.81 $ & $33.29  \pm 0.72$ & $ 32.08  \pm 0.42 $ & $\textbf{20.28} \pm 1.1$ & $30.78  \pm 1.2   $\\  \hline
\textbf{Trace} & $13.95  \pm 0.45 $ & $11.28  \pm 0.57$ & $ 12.57  \pm 0.21  $ & $13.23 \pm 0.4$ &$\textbf{10.91}  \pm 0.45 $\\  \hline
\textbf{Wine} & $63.06  \pm 0.02 $ & $59.34  \pm 0.02 $ & $ 49.94  \pm 0.02 $ & $\textbf{19.01} \pm 0.02$ & $49.94  \pm 0.01   $\\  \hline
\textbf{ShapesAll} & $67.98  \pm 3.0 $ & $58.67  \pm 1.4 $ & $ 61.42  \pm 5.5 $ & $\textbf{52.97} \pm 1.9$ & $57.80  \pm 1.2 $\\  \hline
\textbf{FiftyWords} & $64.91  \pm 1.7$ & $60.91  \pm 1.0 $ & $ 60.92  \pm 0.7 $ & $71.84 \pm 3.4$ &$\textbf{57.89}  \pm 1.0$\\  \hline
\textbf{WordSynonyms} & $70.95  \pm 1.5$ & $66.02  \pm 0.8 $ & $ 66.52  \pm 0.5$ & $68.21 \pm 2.7$ &$\textbf{62.22}  \pm 1.1$\\  \hline
\textbf{InsectSounds} & $51.86  \pm 0.6 $ & $40.24  \pm 0.8$ & $ 41.93  \pm 0.6 $ & $\textbf{38.11} \pm 0.9$ & $38.22  \pm 0.3 $\\  \hline
%\textbf{Fashion MNIST} 
\textbf{ECG5000}  & $21.92  \pm 0.75 $  & $20.31 \pm 0.39$& $20.31  \pm 0.36$& $\textbf{18.06} \pm 0.9$ &$ 20.29  \pm 0.4 $\\ \hline
\textbf{Earthquakes}  & $56.23 \pm 4.1 $ &  $54.62  \pm 4.1 $ & $51.79 \pm 1.0 $ & $99.41 \pm 0.2$ &$\textbf{50.20} \pm 0.5$\\ \hline
\textbf{Haptics}  & $37.25 \pm 0.2 $ &  $36.02  \pm 1.8$ & $27.21 \pm 0.5 $ &$ \textbf{16.94} \pm 3.4$ & $26.06 \pm 0.9$\\ \hline
\textbf{FaceFour} & $49.82 \pm  1.0 $ &  $48.51  \pm 0.8 $ & $48.52 \pm 0.7 $ & $48.60 \pm 1.9 $ &$\textbf{46.00} \pm 0.6$\\ \hline   
\textbf{Synthetic} & $95.61 \pm  1.3 $ &  $89.37  \pm 1.0 $ & $88.47 \pm 0.9 $ & $\textbf{55.87} \pm 0.8 $ & $86.83 \pm 0.6 $\\ \hline  
\end{tabular}
%\end{adjustbox}
\label{table:accu}
\end{table}

The degrees of freedom of our regularized AE comprise the usual AE parameters (per layer affine transformations) and the parameters of each regularization. The $\epsilon$ values are found from the analytical form given by Propositions~\ref{prop:theta_1} and \ref{prop:theta_2} in Appendix~\ref{app:params}. We learn the matrices $G_{k}, \forall k \in \left \{1,\dots,h \right \}$ with gradient descent based optimizer \citep{kingma2014adam} and thus our method introduces $h d^2$ additional parameters, where $d$ is the dimension of the input data. Note that a priori knowledge on the structure of the $G_k$ such as low-rank or skew-symmetric, i.e., Lie algebra of the special orthogonal group, can be imposed to reduce the number of parameters; we do not explore this in our study while it could be considered to speed up the computations and improve the regularization tractability. The regularizations themselves depend on the AE to find the optimal $\epsilon$ and adapt the matrices $G_k$. 
The dimension of each $G_k$ is quadratic in the dimension of the data. As such, for a high-dimensional datasets, the number of learnable parameters is large. Hence the optimization of the $G_k$ matrices remains the current bottleneck of the method.
We propose to apply the regularization term during training starting from the random initialization. More advanced strategies such as scheduled alternating minimization or employing a warm-up phase could be leveraged and result in further improvement in performance. 

We now describe how we approximate the regularization terms by removing the need to evaluate the entire input space partition of the decoder for the second-order term and removing the need to sample the entire domain for the first-order term.

\subsection{Region's Sampling and Computational Complexity}
\label{sec:sampling}
Recall that in the proposed second-order regularization, one should have the knowledge of the decoder latent space partition. In practice, and for large networks, the discovery of the partition would not be feasible. We thus propose to approximate the regularization by only sampling some of the regions and some of their respective neighbors.
This sampling is done by first randomly sampling some vectors in the AE latent space. As for each sample, the associated per region map is automatically formed during the foward pass of the decoder, the per region parameters can be obtained by computing the affine mapping induced by the samples. To compute the neighbors of those sample regions, we use a simple dichotomic search. That is, for each of the sampled regions, we sample another (nearby) vector and keep pushing this new sample toward the first sample until one obtains the closest sample that remains in a different region. With the above, one now has the knowledge of some regions and one neighboring region for each of those regions. We leverage this approach and perform the search of a single neighbor; for a better approximation of the regularization, one can repeat this sampling process and accumulate the obtained regions and neighbors. For the first-order term, we propose a similar approximation where we approximate the integral by sampling a latent space vector $\theta$ (at each mini-batch).

\begin{figure}[t]
\begin{center}
\hspace{1cm}
    \begin{minipage}{0.17\linewidth}
    \begin{center}
     AE
    \end{center}
    \end{minipage}
        \begin{minipage}{0.17\linewidth}
        \begin{center}
    Contractive 
    \\  AE
    \end{center}
    \end{minipage}
    \begin{minipage}{0.17\linewidth}
    \begin{center}
     Denoising \\
     AE
    \end{center}
    \end{minipage}
    \hspace{.05cm}
    \begin{minipage}{0.17\linewidth}
    \begin{center}
     Lie Group \\
    AE ($1^{th}$)
    \end{center}
    \end{minipage}
    \hspace{.05cm}
        \begin{minipage}{0.17\linewidth}
    \begin{center}
     Lie Group \\
    AE $(2^{nd})$
    \end{center}
    \end{minipage}
    \end{center}
    \begin{center}

    \centering
    \begin{minipage}{0.04\linewidth}
    \rotatebox{90}{Test Set Reconst. Error}
    \end{minipage}
    \hspace{-.5cm}
    \begin{minipage}{0.95\linewidth}
    \vspace{-.5cm}
    \includegraphics[width=1\linewidth]{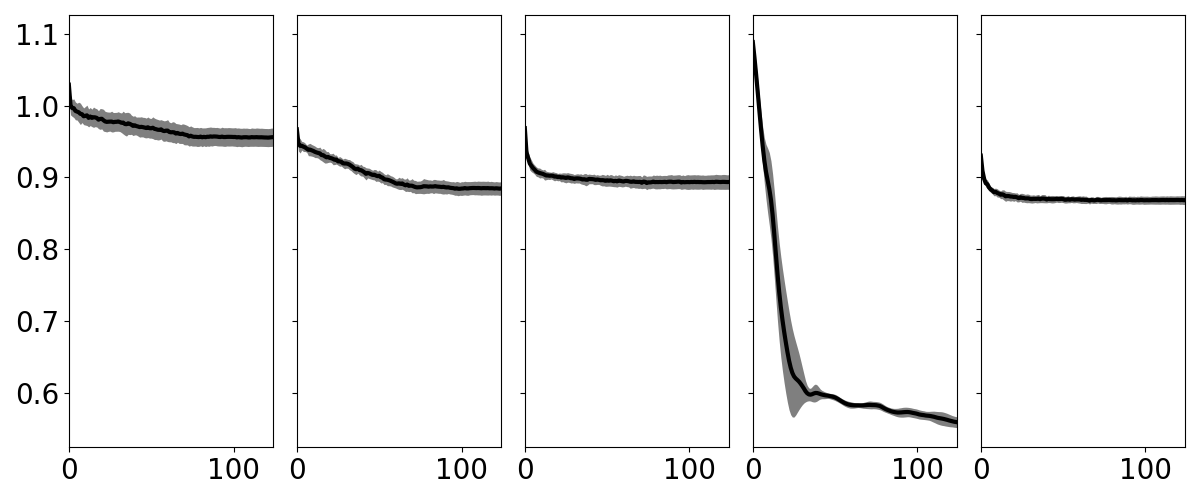}
        \vspace{-1.cm}
    \end{minipage}

\end{center}

\begin{center}
\begin{minipage}{.06\linewidth}
\begin{center}
    
\end{center}
\end{minipage}
    \begin{minipage}{0.16\linewidth}
    \begin{center}
     \; \;\;\; \;Epochs
    \end{center}
    \end{minipage}
        \hspace{.15cm}
    \begin{minipage}{0.16\linewidth}
    \begin{center}
    \;\;\;\;\;\; Epochs
    \end{center}
    \end{minipage}
            \hspace{.3cm}
        \begin{minipage}{0.16\linewidth}
        \begin{center}
    \; \; \;Epochs
    \end{center}
    \end{minipage}
                \hspace{.3cm}
    \begin{minipage}{0.16\linewidth}
    \begin{center}
    Epochs
    \end{center}
    \end{minipage}
                \hspace{.3cm}
    \begin{minipage}{0.16\linewidth}
    \begin{center}
    Epochs
    \end{center}
    \end{minipage}
    \end{center}
    \vspace{-.5cm}
       \caption{Test set reconstruction error on the SyntheticControl dataset evaluated on the best set of parameters for different AEs (from left to right): AE, Higher Order Contractive AE, Denoising AE,  Lie Group AE (first-order), and Lie Group AE (second-order). For each model, the mean over $10$ runs is reported in black, and the gray area corresponds to its standard deviation. We observe that the first order regularization performs much better than the second order one, which is close to the higher-order contactive AE error. In fact, this dataset contains six classes of time-series trends (upward, downward, normal,...), which can be easily related by a linear transformations, that is, the first order regularization can be easily optimized. The second order is harder to train and is more sensitive to the sampling of the regions, therefore is less reliable and harder to interpret. 
}
           \label{fig:explourd}
\end{figure}

\begin{figure}[t]
\begin{center}
    \begin{minipage}{.49\linewidth}
    \includegraphics[width=1\linewidth]{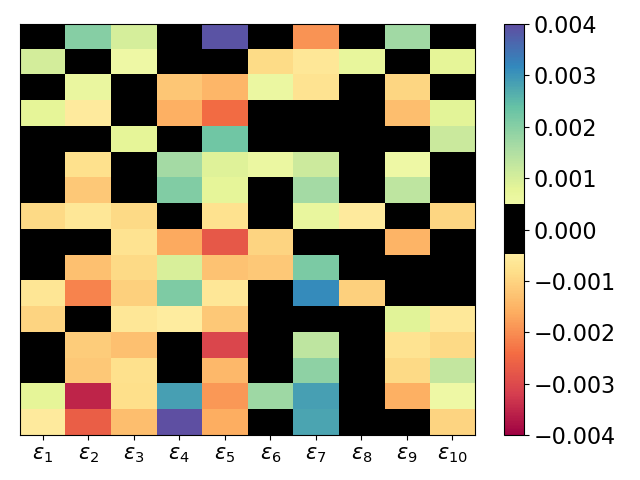}
    \end{minipage}
    \begin{minipage}{.49\linewidth}
    \includegraphics[width=1\linewidth]{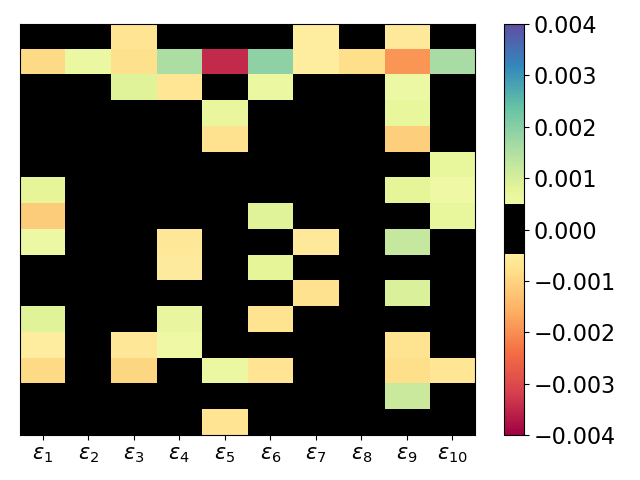}
    \end{minipage}
    \begin{minipage}{.49\linewidth}
    \includegraphics[width=1\linewidth]{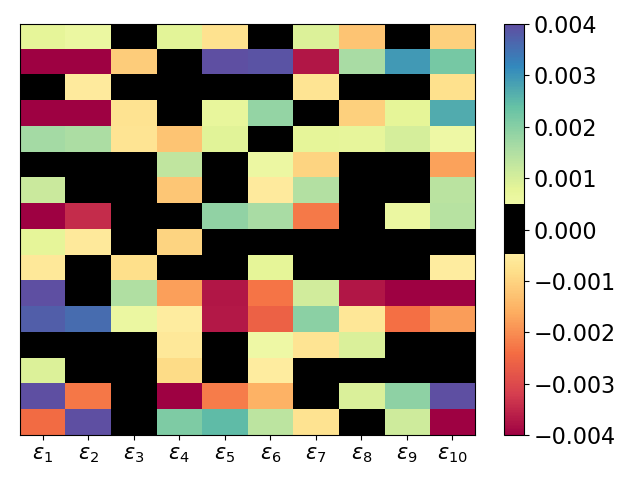}
    \end{minipage}
        \begin{minipage}{.49\linewidth}
        \caption{ Visualisation of the value (after learning) of the group strength parameter ,$\epsilon$, for the (\textit{Top left}) SyntheticControl, (\textit{Top right}) Haptics, and (\textit{Bottom left}) FaceFour datasets, in the case of the first order regularization. For each row, we observe the value of the the parameter $\epsilon_i$ for a given sample, $D(\theta)$ and its neigbhor  $D(\theta')$ as per Eq.~\ref{eq:regAE_first}, that is, what is the amplitude of the transformation associated to the generator $G_i$ needed to map one onto the other. Note that we display the epsilons for $16$ sampled neighboring pairs. } 
        \label{fig:epsilon}
        \end{minipage}
\end{center}
\end{figure}

Let us now consider the computational complexity induced by the regularizations omitting the computational cost of a pass through the AE as it is shared across all techniques. 
%For the second-order regularization, the Jacobian matrix needs to be computed through $h$ forward passes of the decoder, each computing a column of the Jacobian. For standard mini-batch sizes ($<128$) we found no additional burden coming from this computation as the forward pass of the AE is done in parallel not only between samples but also between the columns of the Jacobian matrix computation, we thus omit it. 
The optimal coordinates $\epsilon^*$ are obtained by solving a linear system of $h$ equations in both cases for each sampled datum (first-order) or each sampled region (second-order). This equation has to be solved for each sample region or latent space vector; we denote this by $N$ as in our case, we sample in each mini-batch as many vectors/regions as the size of the mini-batch. We obtain the time complexity $\mathcal{O}(h^2N+d^2hN)$ for the first-order, and $\mathcal{O}(h^2N+d^2h^2N)$ for the second-order, and a space complexity of $\mathcal{O}(d^2h)$ in both cases being driven by the need to retain the matrices $G_1,\dots,G_h$. The current bottleneck is the storage of those matrices, which limits the size of the AE bottleneck and output dimension. 

\subsection{Experiments}
\label{sec:exp}

We evaluate our framework on diverse datasets, including images and time-series data including speech, medical as well a seismic recordings, the description of the datasets is given in Appendix~\ref{ap:dataset}.
For each model and each hyperparameter, we perform $10$ runs for $125$ epochs with batch size $16$. The results are reported in Table~\ref{table:accu}. In this table, the statistics reported correspond to the average over the $10$ runs, each run using the test set performances based on the best validation set measure. Note that for CIFAR$10$, the computational burden of both the second-order Lie group regularization and the higher-order contractive one is too high. Thus only the AE, denoising AE and the first-order Lie group regularization are evaluated.

We propose, in particular, to visualize the test set reconstruction for the different AE models during training in Fig.~\ref{fig:explourd} (see Fig.~\ref{fig:error} in Appendix~\ref{app:Figures} for supplementary figures), where we can see that both Lie Group AEs are robust to the DN initialization and do not overfit. Besides, we can observe that while the first order on this dataset outperforms all the other regularizations, its variance at the begining of the learning phase is more volatile than other approaches. 

The hyperparameter responsible for the variance of the noise added to the data in the Denoising AE case also corresponding to the noise added to the data to sample Jacobian of nearby regions in Higher-Order Contractive AE parameter is evaluated for the values $\left \{0.001, 0.01,0.1,1 \right \}$. Another hyperparameter is the regularization trade-off parameter for both the Higher-Order Contractive AE and Lie Group AEs, the following values are tested for both models $\left \{0.001, 0.01, 0.1, 1, 10, 100,1000 \right \}$. All the models were trained using the same AE with $3$ fully connected encoder layers with ReLU with bottleneck dimension $h=10$, and $3$ fully connected decoder layer with ReLU and $1$ linear fully connected output layer.

We can observe in Table~\ref{table:accu} that the Lie group regularizations are usually outperforming the other methods the different datasets we evaluated. While the second-order regularization is more computationally demanding, it appears to be more stable and robust to the change of parameters. Besides, we can see that the first-order regularization might be more sensitive to how close to a Lie group the dataset under evaluation is. In fact, both the first and second-order depends on the learned infinitesimal operator, however, while the second-order use such matrix to constrain the overall curvature of the CPA the first-order use it to constrains its mapping.

In Fig.~\ref{fig:epsilon}, we show, for three datasets, the value of the parameters $\epsilon_i, \forall i \in \left \{1,\dots,h\right\}$ obtained by minimizing Eq.~\ref{eq:regAE_first}. This observation is important as the framework we propose assumes that the dimension of the orbit, hence the number of group transformations, is the same as the dimension of the bottleneck layer of the autoencoder. In particular, we propose to highlight the sparsity of the parameters, i.e., if for various sampled pairs, the value a particular strength parameter, $\epsilon_i$, is close to $0$. If it is, then the associated group of transformation is not being used to map any $\textbf{D}(\theta)$ to a close sample $\textbf{D}(\theta')$. Therefore, the number of group of transformations selected is too large, e.g., in the top right subplot, corresponding to the Haptics dataset, we observe that, the $10$ transformations are not required.

\section{Conclusion}

We analyzed AEs from a geometrical standpoint and provided insights into how AEs are approximating the data manifold. In particular, we provided analytical formulas of the per region map that AEs are performing using its continuous piecewise affine formulation. This approach's strength lies in its interpretability power, as for a given region in the input space, the DN mapping is a simple affine map. Leveraging these key features, we proposed to enhance and guarantee the generalization capability of AEs by proposing two regularizations that capture the symmetry in the data. These regularizations constrain the piecewise continuous surface spanned by the decoder to approximate the orbit of a Lie group. Besides, inspired by the theory of learning Lie group transformations, we alleviated the need to explicitly define a group of symmetry underlying the data and propose to learn the group's generator. In fact, the generator of a Lie group lives in a vector space, thus enabling common matrix manipulations required to perform its update.

Finally, we determined a computationally efficient procedure to learn these regularizations by performing a stochastic sampling approach to determine the per region parameters. Note that, in practice, this sampling is the bottleneck of our method since the regions do not form a regular tiling of the space. We showed that even using such an approximation, the method outperforms other existing regularizations and compete with state-of-the-art results on various datasets.

We believe our methodology opens the door to novel ways for DN practitioners to regularize the DN's parameter space depending on data assumptions. In particular, this approach should be relevant in the case of the application of DNs in science, where the underlying physical phenomenons can be partially theoretically prescribed, and following our approach, problem-specific regularization can be derived to provide the AE with generalization guarantees and improved approximation capabilities. 
%\section*{Broader Impact}
%We derived a novel regularization of AutoEncoders (AEs) that can be applied on any architecture and dataset. This regularization is adapted to data falling under the Lie Group model but can be leveraged in any case. As this method is not specific to any data and model it should be of great use for any researcher/practitioner trying to improve AE generalization performances. The validation of the method is performed on standard benchmarks publicly available.

\textbf{Acknowledgments:}{\small
 A special thanks to Anirvan Segupta and Yanis Barhoun for their insights and discussions.
RC and BA are supported by NSF grant SCH-1838873 and NIH grant R01HL144683-CFDA. Both RB are supported by NSF grants CCF-1911094, IIS-1838177, and IIS-1730574; ONR grants N00014-18-12571, N00014-20-1-2787, and N00014-20-1-2534; AFOSR grant FA9550-18-1-0478; and a Vannevar Bush Faculty Fellowship, ONR grant N00014-18-1-2047.}

\small
%amsalpha
%\bibliographystyle{abbrv}
\bibliography{biblio}

\appendix
\clearpage

\section{Proofs}

\subsection{Proof of Proposition~\ref{prop_2}}
\label{proof:prop2}
\begin{proof}
Perfect reconstruction $\Rightarrow$: $\forall \omega, \forall x \in \omega$, $x= \sum_{k=1}^{h} \left \langle x, a^{E}_k[\omega] \right \rangle a^{D}_k[\omega]$.
We have $\forall \omega, \forall x \in \omega$
\begin{align*}
\sum_k \left \langle x, a^{E}_k[\omega]  \right \rangle a^{D}_k[\omega] & = \sum _{k=1}^{h} \left \langle \sum_{k'=1}^{h} \left \langle x, a^{E}_{k'}[\omega] \right \rangle a^{D}_{k'}[\omega] , a^{E}_k[\omega] \right \rangle a^{D}_k[\omega]  \\
& = \sum_k \sum_{k'=1}^{h} \left \langle x, a^{E}_{k'}[\omega] \right \rangle \left \langle a^{D}_{k'}[\omega], a^{E}_k[\omega] \right \rangle a^{D}_k[\omega] 
\end{align*}
$\iff A^{D}_{\omega} A^{E}_{\omega} x = A^{D}_{\omega} A^{D^{T}}_{\omega} A^{E^{T}}_{\omega} A^{E}_{\omega} x$ since $A^{D}_{\omega}A^{E}_{\omega}$ is injective on the region (as per perfect reconstruction condition) it implies that   $ A^{D^{T}}_{\omega} A^{E^{T}}_{\omega} = I_h$, where $I_h$ is the identity matrix of dimension $h \times h$
\end{proof}

\subsection{Proof of Corrolary~\ref{cor:corrolary_reconst}}
\label{proof:corrolary1}
\begin{proof}
For a $2$-layers ReLU autoencoder network, we have the following affine spline parameters $\forall x \in \omega$:
\begin{align*}
    a_{k'}^{E}[\omega]  = 1_{\left \{ W^{1^T}_{k',.} x >0 \right \} } W^{1}_{k',.} \\
    a_{k}^{D}[\omega]  = \begin{pmatrix}
    1_{\left \{ W^{2^T}_{1,.} \boldsymbol{z} >0 \right \} } \\
    \vdots \\
    1_{\left \{ W^{2^T}_{d,.} \boldsymbol{z} >0 \right \} } \\
    \end{pmatrix} 
    \cdot W^2_{.,k}
\end{align*}
where $\cdot$ defines here the elementwise vector multiplication.
Now, 
\begin{align*}
\left \langle a_k^D[\omega] , a^{E}_{k'}[\omega] \right \rangle & = \left \langle Q^2_{\omega} W^2_{.,k}, 1_{\left \{ W^{1^T}_{k',.} x >0 \right \}} W^{1}_{k',.} \right \rangle \\
& = 1_{ \left \{ W^{1^T}_{k',.} x >0  \right \} } W^{2^T}_{.,k} Q^2_{\omega} W^{1}_{k',.}  \\
&= 1_{ \left \{ W^{1^T}_{k',.} x>0 \right \} } W^{2^T}_{.,k} \begin{pmatrix}
1_{ \left \{ W^{2^T}_{1,.} \boldsymbol{E}(x)>0  \right \} } W^{1}_{k',1} \\
\vdots \\
1_{ \left \{ W^{2^T}_{n,.} \boldsymbol{E}(x)>0  \right \} } W^{1}_{k',d} \\
\end{pmatrix}  \\
& = 1_{ \left \{ W^{1^T}_{k',.} x>0 \right \} } \left ( \sum_{i=1}^{d} W^{2}_{i,k} W^{1}_{k',i} 1_{ \left \{ W^{2^T}_{i,.} \boldsymbol{E}(x)>0 \right \} } \right )
\end{align*}
\end{proof}

\subsection{Proof of Theorem~\ref{prop:oneisenough}}
\label{proof:cor1}
\begin{proof}
For both cases, we recall that we assume that $\forall \theta, x(\theta) \neq 0 $. In fact, relaxing such assumption would lead to a degenerated case where the interpolant can be constant and equal to $0$. In practice this assumption is more than realistic as the '$0$-datum' is usually not part of any dataset. 
Let's first consider the case $k=1$.
\\
\\
We know that the solution of $\frac{d\boldsymbol{f}(\theta)}{d \theta} =G\boldsymbol{f}$ is $\boldsymbol{f}(\theta) =  \exp(\theta G) \boldsymbol{f}(0)$. Now it is clear that if $\exists \theta_1$ such that $\boldsymbol{f}(\theta_1) = \boldsymbol{x}(\theta_1)$, then $\boldsymbol{f}(0) = \boldsymbol{x}(0)$, and therefore, $\boldsymbol{f}(\theta) = \exp(\theta G ) \boldsymbol{x}(0) = \boldsymbol{x}(\theta)$, $\forall \theta$.
\\
\\
Now for the case $k=2$, 
\\
\\
Let $y(\theta) = \frac{d f(\theta)}{d \theta}$, then we have
\[  \frac{ d \boldsymbol{y}(\theta)}{d \theta} = G \boldsymbol{y}(\theta), \]
which solution is
\[ \boldsymbol{y}(\theta) = \exp(\theta G) \boldsymbol{y}(0). \]
Thus, $\frac{d \boldsymbol{f}}{d \theta} = \exp(\theta G) \frac{d \boldsymbol{f}(\theta)}{d \theta }|_{\theta=0}$.
Now since 
\[\exp(\theta G) G\frac{d \boldsymbol{f}(\theta)}{d \theta }|_{\theta=0} =  \sum_{n \geq 0}  \frac{G^{n}}{n!} G \frac{d \boldsymbol{f}(\theta)}{d \theta }|_{\theta=0} = \sum_{n \geq 0} G \frac{G^{n}}{n!}  \frac{d \boldsymbol{f}(\theta)}{d \theta }|_{\theta=0} = G \exp(\theta G)  \frac{d \boldsymbol{f}(\theta)}{d \theta }|_{\theta=0}  \] 
we have that,
\[ \boldsymbol{f}(\theta) = \exp(\theta G) G^{-1} \frac{d \boldsymbol{f}(\theta)}{d \theta }|_{\theta=0} +c \boldsymbol{1}, \]
where $c \in \mathbb{R}$ and $\boldsymbol{1}$ denotes the $d$-dimensional vector of $1$.
%\begin{lemma}
%\label{prop:2ndorder}
%With the initial condition of the model $\frac{d f(\theta)}{d \theta}|_{\theta=0} = G x(0)$, the unique solution of 
%\begin{equation}
%\label{2n_lie_equation}
%\frac{d^{2} f(\theta)}{d \theta^{2}} =  G \frac{d f(\theta)}{d \theta}
%\end{equation} is given by
%\[ f(\theta) = \exp(\theta G) x(0) +c,  \]
%where $c \in \mathbb{R}^d$
%\end{lemma}
Let's now add the interpolation condition, that is
\[ \exists \theta_1, \theta_2, \text{ s.t. } \boldsymbol{f}(\theta_1) = \boldsymbol{x}(\theta_1), \; \; \boldsymbol{f}(\theta_2) = \boldsymbol{x}(\theta_2) \]
Which is equivalent to
\[ \left\{\begin{matrix}
\exp(\theta_1 G)  G^{-1}\frac{d \boldsymbol{f}(\theta)}{d \theta }|_{\theta=0} + c \boldsymbol{1}= \exp(\theta_1G) x(0)   \\ 
\exp(\theta_2 G)  G^{-1}\frac{d \boldsymbol{f}(\theta)}{d \theta }|_{\theta=0} + c \boldsymbol{1}= \exp(\theta_2G) x(0)   \\ 
\end{matrix}\right. \]
Which implies that, $\frac{d \boldsymbol{f}(\theta)}{d \theta }|_{\theta=0}= Gx(0)$ and that $c=0$.

Therefore, \[ \boldsymbol{f}(\theta) = \exp(\theta G) x(0) = x(\theta), \forall \theta \]

\end{proof}

\subsection{Proof of Proposition~\ref{prop:theta_1}}
\label{proof:cortheta_1}
\begin{proof}
\begin{align*}
    \left \| \boldsymbol{D}(\theta) - (I+\sum_{k=1}^h \epsilon_k G_k) \boldsymbol{D}(\theta') \right \|_2^2 &= \left \| \boldsymbol{D}(\theta) - \boldsymbol{D}(\theta') - \sum_{k=1}^h \epsilon_k G_k \boldsymbol{D}(\theta') \right \|_2^2 \\
    & = \left \langle  \boldsymbol{D}(\theta) - \boldsymbol{D}(\theta')     ,      \boldsymbol{D}(\theta) - \boldsymbol{D}(\theta')    \right \rangle \\
    &  \; \;\;\;\;\; \;\;\;\;\; -2 \left \langle \boldsymbol{D}(\theta) - \boldsymbol{D}(\theta'), \sum_{k=1}^h \epsilon_k G_k \boldsymbol{D}(\theta') \right \rangle  \\
    & \; \; \; \;\;\;\;\;\;\;\;\;\; + \left \langle \sum_{k=1}^h \epsilon_k G_k \boldsymbol{D}(\theta'), \sum_{k=1}^h \epsilon_k G_k \boldsymbol{D}(\theta') \right \rangle,
\end{align*}
Now, $\forall j \in \left \{1,\dots, h \right \}$
\begin{align*}
\frac{\delta \left \|\boldsymbol{D}(\theta) - \boldsymbol{D}(\theta') - \sum_{k=1}^{h} \epsilon_k G_k \boldsymbol{D}(\theta') \right \|^2_{2}}{\delta \epsilon_j}  & = -2 (\boldsymbol{D}(\theta)-\boldsymbol{D}(\theta'))^T G_j \boldsymbol{D}(\theta') + 2 \sum_{k=1}^h \epsilon_k \boldsymbol{D}(\theta')^T G_k^T G_j \boldsymbol{D}(\theta'),
\end{align*}
setting $\frac{\delta \left \|\boldsymbol{D}(\theta) - \boldsymbol{D}(\theta') - \sum_{k=1}^{h} \epsilon_k G_k \boldsymbol{D}(\theta') \right \|^2_{2}}{\delta \epsilon_j} =0$, for all $j$ we obtain
\[ \epsilon^* = \begin{pmatrix}
\left \| G_1 \boldsymbol{D}(\theta') \right \|_2^2 & \dots & \boldsymbol{D}(\theta')^T G_h^T G_1 \boldsymbol{D}(\theta') \\
\vdots & \ddots & \vdots \\
\boldsymbol{D}(\theta')^T G_1^T G_h \boldsymbol{D}(\theta') & \dots & \left \| G_h \boldsymbol{D}(\theta')  \right \|_2^2 \end{pmatrix}^{-1}
\begin{pmatrix}
(\boldsymbol{D}(\theta) - \boldsymbol{D}(\theta'))^T G_1 \boldsymbol{D}(\theta') \\
\vdots \\
(\boldsymbol{D}(\theta) - \boldsymbol{D}(\theta'))^T G_h \boldsymbol{D}(\theta')
\end{pmatrix} \],
and we have that
\[ \begin{pmatrix}
\left \| G_1 \boldsymbol{D}(\theta') \right \|_2^2 & \dots & \boldsymbol{D}(\theta')^T G_h^T G_1 \boldsymbol{D}(\theta') \\
\vdots & \ddots & \vdots \\
\boldsymbol{D}(\theta')^T G_1^T G_h \boldsymbol{D}(\theta') & \dots & \left \| G_h \boldsymbol{D}(\theta')  \right \|_2^2 \end{pmatrix} = \begin{pmatrix}
G_1 \boldsymbol{D}(\theta') \\
\vdots \\
G_h \boldsymbol{D}(\theta') 
\end{pmatrix}^T \begin{pmatrix}
G_1 \boldsymbol{D}(\theta') \\
\vdots \\
G_h \boldsymbol{D}(\theta') 
\end{pmatrix} \]
which is thus a positive definite matrix.
\end{proof}

\subsection{Proof of Proposition~\ref{prop:theta_2}}
\label{proof:cortheta_2}
\begin{proof}
Given $J_{\omega'}[\boldsymbol{D}]= A^{D}_{\omega'} $ and $ J_{\omega}[\boldsymbol{D}] =  A^{D}_{\omega}$, we have
\begin{align*}
     \left \| A_{\omega'}^{D} - A_{\omega}^{D} - \sum_{k=1}^{h} \epsilon_k G_k A_{\omega}^{D} \right \|^2_{F} & = \Tr  ( (A_{\omega'}^{D} - A_{\omega}^D -\sum_{k=1}^{h} \epsilon_k G_k A^{D}_{\omega}) \odot (A_{\omega'}^{D} - A_{\omega}^D -\sum_{k=1}^{h} \epsilon_k G_k A^{D}_{\omega}) 1 1^T ) \\
     & = \text{Tr}  ( A^{D}_{\omega'} \odot A^{D}_{\omega'} - A^{D}_{\omega'} \odot A^{D}_{\omega} - A^{D}_{\omega'} \odot (\sum_{h=1}^{k} \epsilon_k G_k A^{D}_{\omega}) + A^{D}_{\omega} \odot A^{D}_{\omega} \\
     & \hspace{.5cm} \; \:  - A^{D}_{\omega} \odot A^{D}_{\omega'} + A^{D}_{\omega} \odot (\sum_{h=1}^{k} \epsilon_k G_k A^{D}_{\omega})- (\sum_{h=1}^{k} \epsilon_k G_k A^{D}_{\omega}) \odot A^{D}_{\omega'} \\
     & \hspace{.5cm} \; \: + (\sum_{h=1}^{k} \epsilon_k G_k A^{D}_{\omega}) \odot A^{D}_{\omega} + (\sum_{h=1}^{k} \epsilon_k G_k A^{D}_{\omega}) \odot (\sum_{h=1}^{k} \epsilon_k G_k A^{D}_{\omega}) ) 1 1^T ). 
\end{align*}
Now, $\forall j \in \left \{1,\dots,h \right \}$
\begin{align*}
     \frac{\delta \left \| A_{\omega'}^{D} - A_{\omega}^{D} - \sum_{k=1}^{h} \epsilon_k G_k A_{\omega}^{D} \right \|^2_{F}}{\delta \epsilon_j}  & = 2 \Tr ( (G_j A^{D}_{\omega}) \odot (A^{D}_{\omega} - A^{D}_{\omega'} + \sum_{k=1}^{h} G_k A^{D}_{\omega} 1 1^{T})) \\
     & = 2 \Tr (G_j A^{D}_{\omega} \odot (A^{D}_{\omega}- A^{D}_{\omega'}) 1 1^{T}) \\
     & \hspace{.5cm} + 2 \sum_{k=1}^{h} \epsilon_k \Tr( (G_j A^{D}_{\omega} \odot G_k A^{D}_{\omega}) 11^T), 
\end{align*}
setting $\frac{\delta \left \| A_{\omega'}^{D} - A_{\omega}^{D} - \sum_{k=1}^{h} \epsilon_k G_k A_{\omega}^{D} \right \|^2_{F}}{\delta \epsilon_j}=0$ for all $j$ and rearranging in matrix form gives

\[  \epsilon^* = \tiny{ \begin{pmatrix}
\sum_{i}\|G_1 [A_{\omega}^D]_{.,i}\|_2^2 & \dots & \sum_i \langle G_1 [A_{\omega}^D]_{.,i},G_h [A_{\omega}^D]_{.,i}\rangle \\ 
\vdots & \ddots & \vdots\\ 
\sum_i \langle G_h [A_{\omega}^D]_{.,i},G_1 [A_{\omega}^D]_{.,i}\rangle &  \dots &  \sum_{i}\|G_h [A_{\omega}^D]_{.,i}\|_2^2
\end{pmatrix}^{-1} \begin{pmatrix}
\sum_i \langle G_1 [A_{\omega}^D]_{.,i},[A_{\omega'}^D]_{.,i} - [A_{\omega}^D]_{.,i})  \\ 
\vdots \\ 
\sum_i \langle G_h [A_{\omega}^D]_{.,i},[A_{\omega'}^D]_{.,i} - [A_{\omega}^D]_{.,i})\end{pmatrix}} ,\]
and we have that 
\[  \tiny{ \begin{pmatrix}
\sum_{i}\|G_1 [A_{\omega}^D]_{.,i}\|_2^2 & \dots & \sum_i \langle G_1 [A_{\omega}^D]_{.,i},G_h [A_{\omega}^D]_{.,i}\rangle \\ 
\vdots & \ddots & \vdots\\ 
\sum_i \langle G_h [A_{\omega}^D]_{.,i},G_1 [A_{\omega}^D]_{.,i}\rangle &  \dots &  \sum_{i}\|G_h [A_{\omega}^D]_{.,i}\|_2^2
\end{pmatrix} = \sum_{i=1}^{h} \begin{bmatrix}
G_1 [A_{\omega}^D]_{.,i}\\ 
\vdots \\ 
G_h [A_{\omega}^D]_{.,i}
\end{bmatrix}^T \begin{bmatrix}
G_1 [A_{\omega}^D]_{.,i}\\ 
\vdots \\ 
G_h [A_{\omega}^D]_{.,i}
\end{bmatrix}}, \]

therefore it is the sum of positive definite matrices.

For the case h=1, we have that
\begin{align*}
    \left \| a_{\omega'}^{D} - a_{\omega}^{D} -  \epsilon G a_{\omega}^{D} \right \|^2 & = \left \langle a_{\omega'}^{D} , a_{\omega'}^{D}\right \rangle - 2 \left \langle a_{\omega'}^{D}, a_{\omega}^{D} \right \rangle + \left \langle a_{\omega}^{D}, a_{\omega}^{D} \right \rangle \\
    & \hspace{1cm} + 2 \left \langle \epsilon G a_{\omega}^{D}, a_{\omega}^{D} - a_{\omega'}^{D} \right \rangle + \left \langle \epsilon G a_{\omega}^{D}, \epsilon G a_{\omega}^{D} \right \rangle,
\end{align*}
thus,
\begin{align*}
    \frac{\delta \left \| a_{\omega'}^{D} - a_{\omega}^{D} -  \epsilon G a_{\omega}^{D} \right \|^2}{\delta \epsilon} = a_{\omega}^{D^T} G^T (a_{\omega}^{D}-a_{\omega'}^{D}) + \epsilon a_{\omega}^{D^T} G^T G a_{\omega}^{D}
\end{align*}
\end{proof}

For the following proofs, we will denote by $T: \mathbb{R}^{d} \times \mathbb{R}^{h} \rightarrow \mathbb{R}^d$, the transformation operator taking as input a datum and a group parameter, and giving as output the transformed datum. As we used a Lie group, we can define this operator analytically as $T(x,\theta)= \exp(\theta G) x$.

\subsection{Proof of Theorem~\ref{prop:tang}}
For this proof, we will use the notation $\mathcal{T}_{\mathcal{X}}(\omega)$ as the tangent space of the manifold described by the data $\mathcal{X}$ for the data in the region $\omega$, and by $\mathcal{T}_{AE}(\omega)$ the tangent space of the AE for the region $\omega$. We show that if these two tangent space coincides for a given region, i.e., if the tangent space of the AE coincides with the tangent space of the manifold for a specific position, then they coincide everywhere.
\label{proof:tang}
\begin{proof}
By assumption, we know that $\left \{ a^{D}_1(\omega'),\dots, a^{D}_h[\omega'] \right \}$ form a basis of $\mathcal{T}_{\mathcal{X}}[\omega']$. If the regularization is satisfied, we also know that the tangent induced by the AE at position $\omega$, denoted by $\mathcal{T}_{AE}(\omega)$, is equal to $T(\mathcal{T}_{\mathcal{X}}(\omega'),\theta)$. In fact, for the order $k=2$ the regularization imposes that the tangent (induced by the AE) of the different regions are transformed version of each other by the transformation operator $T$. Now for the order one, we know that if $\frac{d \boldsymbol{f}(\theta)}{d \theta} = G \boldsymbol{f}(\theta)$, then $\frac{d^2 \boldsymbol{f}(\theta)}{d \theta^2} = G \frac{d \boldsymbol{f}(\theta)}{d \theta}$. Which means that if the outputs of the interpolant $\boldsymbol{f}$ are connected by the transformation group $T$, then the tangents of such interpolant are also connected by the same group of transformation. 

Note that the operator $T$ forms a Lie group action operator, it is a diffeomorphism from the orbit of the group to the orbit of the group. Therefore, $\forall \omega$, it exists $\theta$ such that $T(\mathcal{T}_{\mathcal{X}}(\omega'),\theta)= \mathcal{T}_{\mathcal{X}}(\omega)$. Per assumption, the tangent of the region $\omega'$,i.e. $\mathcal{T}_{AE}(\omega')$ is actually tangent to $\mathcal{X}$ as its basis coincides with $\mathcal{T}_{\mathcal{X}}(\omega')$. Denote by $x \in \mathcal{X}$ the point at which $\mathcal{T}_{\mathcal{X}}(\omega')$ and $\mathcal{X}$ intersects. Let's first  first prove that for $\epsilon'= \argmax_{\epsilon} x + \epsilon h \in \omega$, where $h \in \mathcal{T}_{\mathcal{X}}(\omega')$, that is, $x+ \epsilon' h$ lies at the boundary of the region $\omega'$. We further assume that $\left \|h \right \|=1$ such that $\epsilon'= \text{Rad}(\omega')$. Let's define a smooth curve on the manifold $\gamma: \mathbb{R} \rightarrow \mathcal{X}$ such that $\gamma(0)=x$ and $\gamma'(0) = h$. Now,
\begin{align*}
    d(x+\epsilon' h,\mathcal{X}) & \leq d(x+\epsilon' h, \gamma(\epsilon')) \\
     & = \left \| \gamma(\epsilon') - \gamma(0) - \epsilon' \gamma'(0) \right \|.
\end{align*}
Since, $\lim_{\epsilon' \rightarrow 0} \frac{\gamma(\epsilon') - \gamma(0)}{\epsilon'} =\gamma'(0)$, we have that $\frac{d(x+\epsilon'h,\gamma(\epsilon'))}{\epsilon'} = o(\text{Rad}(\omega'))$. Then, since the $\omega_i \forall i \in \left \{1,\dots, |\Omega| \right \}$ form a partition of $\Omega$ and that by Proposition~\ref{prop:tang} we know that since one tangent of the AE coincides with the tangent of the manifold at the point $x$ then any tangent of the AE coincides with a tangent of the manifold. Thus, we have that $d \left ( \cup_{\omega \in \Omega} \mathcal{T}_{AE}(\omega), \mathcal{X} \right ) = \sum_{i=1}^{|\Omega|} d \left ( \mathcal{T}_{AE}(\omega_i), \mathcal{X} \right ) \leq \sum_{i=1}^{|\Omega|} \text{Rad}(\omega_i)$.

\end{proof}

\subsection{Per Region Tangent - Details}
\label{ap:jaco}
 Let $[\textbf{D} \circ \textbf{E}(.)]_i: \mathbb{R}^d \rightarrow \mathbb{R}$ be the $i^{th}$ coordinate output of the AE, defined as $[\textbf{D} \circ \textbf{E}(x)]_{i} = [A^D_{\omega}]_{i,.}A^{E}_{\omega}x + [A^{D}_{\omega}]_{i,.} B^{E}_{\omega} + [B^{D}_{\omega}]_{i}$.

\begin{align}
   \textit{d}[\textbf{D} \circ \textbf{E}(.)]_i &=[\textbf{D} \circ \textbf{E}(x+\epsilon)]_i-[(\textbf{D} \circ \textbf{E})(x)]_i 
    =\left \langle A_{\omega}^{{E}^T} [A^D_{\omega}]_{i,.}^T, \epsilon \right \rangle, \forall \epsilon \in \mathbb{R}^{d}.
\end{align}
As such, we directly obtain that  
\begin{align}
 \nabla_{x} [\textbf{D} \circ \textbf{E}(.)]_i = A_{\omega}^{{E}^T} [A^D_{\omega}]_{i,.}^T, 
\end{align}

which leads to the Jacobian of the AE as defined in Eq.~\ref{eq:jaco}.

\subsection{Higher-order Contractive Autoencoder: Details}
\label{sec:hoc}
Let consider the case of a 1 hidden-layer encoder, follows by any depth encoder. In the second order regularization, one penalizes $\left \|A^{E}_{\omega} - A^{E}_{\omega'} \right \|_{F}$, where $\omega$ and $\omega'$ are neighboring regions. We know that $A^{E}_{\omega}= Q^{1}_{\omega} W^1$, now let consider the case of a $3$ ReLU-units encoder, that is, $Q_{\omega}^{1}$ is a $3 \times 3$ diagonal matrix, and $W^1 \in \mathbb{R}^{3 \times n}$, where $n$ is the input space dimension. A particular case we consider for our analysis is, $Q^{1}_{\omega} = \text{Diag}(1,0,1)$, and $Q^{1}_{\omega'} = \text{Diag}(1,1,1)$, i.e., the first region $\omega$ is encoded by $2$ activated ReLUs and $\omega'$ by $3$. The associated HOC penalization is $\left \| \begin{pmatrix}
W^{1}_{1,:}\\ 
0\\ 
W^{1}_{3,:}
\end{pmatrix} - \begin{pmatrix}
W^{1}_{1,:}\\ 
W^{1}_{2,:}\\ 
W^{1}_{3,:}
\end{pmatrix} \right \|_{F} = \left \| W^{1}_{2,:} \right \|_2$, where $W^{1} = \begin{pmatrix}
W^{1}_{1,:}\\ 
W^{1}_{2,:}\\ 
W^{1}_{3,:}
\end{pmatrix}$.

Therefore we see even if $W^{1}_{1,:}$ or $W^{1}_{3,:}$ are large, they will not induce a penalization of the curvature between the region $\omega$ and $\omega'$. Besides, if $W^{1}_{2,:}$ is small, even thought it is associated with the changing unit between the two regions, the curvature will not be penalized either.

\section{Optimal Parameters}
\label{app:params}

The following proposition provides the optimal transformation parameters $\epsilon^{\star} = \left [\epsilon_1,\dots,\epsilon_h \right ]^T$ that are used during the learning of the regularized AE.
\begin{prop}
\label{prop:theta_1}
The $\epsilon$ of the first-order regularization defined in Eq.~\ref{eq:regAE_first} is obtained as 
\setlength\itemsep{.004em}
\[ \epsilon^* = \begin{pmatrix}
\left \| G_1 \boldsymbol{D}(\theta') \right \|_2^2 & \dots & \left \langle G_h \boldsymbol{D}(\theta') , G_1 \boldsymbol{D}(\theta') \right \rangle \\
\vdots & \ddots & \vdots \\
\left \langle G_1 \boldsymbol{D}(\theta'), G_h \boldsymbol{D}(\theta') \right \rangle & \dots & \left \| G_h \boldsymbol{D}(\theta')  \right \|_2^2 \end{pmatrix}^{-1}
\begin{pmatrix}
\left \langle \boldsymbol{D}(\theta) - \boldsymbol{D}(\theta'), G_1 \boldsymbol{D}(\theta') \right \rangle \\
\vdots \\
\left \langle \boldsymbol{D}(\theta) - \boldsymbol{D}(\theta'), G_h \boldsymbol{D}(\theta') \right \rangle 
\end{pmatrix}\]
where the matrix is always invertible ($\textbf{D}(\theta') \neq 0$).
(Proof in Appendix~\ref{proof:cortheta_1}.)
\end{prop}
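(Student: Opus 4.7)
The plan is to recognize the inner minimization in Eq.~\ref{eq:regAE_first} as a textbook linear least-squares problem in $\epsilon \in \mathbb{R}^h$. Introduce the residual vector $r \triangleq \boldsymbol{D}(\theta) - \boldsymbol{D}(\theta') \in \mathbb{R}^d$ and the matrix $M \in \mathbb{R}^{d \times h}$ whose $k$-th column is $G_k \boldsymbol{D}(\theta')$. Then $I + \sum_{k} \epsilon_k G_k$ applied to $\boldsymbol{D}(\theta')$ yields $\boldsymbol{D}(\theta') + M\epsilon$, so the squared integrand becomes $\|r - M\epsilon\|_2^2$, a convex quadratic in $\epsilon$.

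The second step is to apply the first-order optimality conditions. Expanding gives $\|r - M\epsilon\|_2^2 = \|r\|_2^2 - 2 \epsilon^T M^T r + \epsilon^T M^T M \epsilon$, whose gradient in $\epsilon$ vanishes at the normal equations $M^T M \epsilon^{\star} = M^T r$. Reading off entries, the $(i,j)$-entry of $M^T M$ equals $\langle G_i \boldsymbol{D}(\theta'), G_j \boldsymbol{D}(\theta') \rangle$, with diagonal $\|G_k \boldsymbol{D}(\theta')\|_2^2$, which reproduces exactly the matrix on the right-hand side of the proposition. Similarly the $k$-th coordinate of $M^T r$ equals $\langle G_k \boldsymbol{D}(\theta'), \boldsymbol{D}(\theta) - \boldsymbol{D}(\theta')\rangle$, matching the right-hand vector. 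Left-multiplying by $(M^T M)^{-1}$ produces the closed form stated in the proposition.

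The main obstacle is justifying the invertibility claim. $M^T M$ is a Gram matrix, so positive semi-definite, and invertibility is equivalent to linear independence of the columns of $M$, i.e., of $\{G_k \boldsymbol{D}(\theta')\}_{k=1}^{h}$. The condition $\boldsymbol{D}(\theta') \neq 0$ asserted in the statement is necessary, and becomes sufficient once one further assumes that the learned infinitesimal generators $G_1,\dots,G_h$ act independently on $\boldsymbol{D}(\theta')$ — a generic condition consistent with the modeling assumption in Sec.~\ref{sec:learning_group} that the $h$ one-parameter subgroups correspond to independent directions in the Lie algebra (coinciding with the intrinsic manifold dimension). Under this assumption, $M$ has full column rank, $M^T M \succ 0$, and the inversion is well-defined; practically, a negligible ridge term could always be added to sidestep degenerate configurations at training time, but no such modification is needed for the stated proof.
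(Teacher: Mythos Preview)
Your proof is correct and follows essentially the same approach as the paper: both expand the squared norm, set the partial derivatives in $\epsilon_j$ to zero, and solve the resulting linear system (the paper does this component-wise, you do it via the matrix $M$ and the normal equations $M^T M\,\epsilon^\star = M^T r$, which is just a cleaner packaging of the same computation). Your treatment of invertibility is in fact more careful than the paper's: the paper simply writes the Gram matrix as $M^T M$ and declares it ``positive definite,'' whereas you correctly observe that $\boldsymbol{D}(\theta')\neq 0$ alone only guarantees positive \emph{semi}-definiteness and that full column rank of $M$ (i.e., linear independence of the $G_k\boldsymbol{D}(\theta')$) is the actual requirement.
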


\begin{prop}
\label{prop:theta_2}
The $\epsilon$ of the second-order regularization defined in Eq.~\ref{eq:regAE_second} is obtained by 
\[  \epsilon^* = \tiny{\begin{pmatrix}
\sum_{i}\|G_1 [A_{\omega}^D]_{.,i}\|_2^2 & \dots & \sum_i \langle G_1 [A_{\omega}^D]_{.,i},G_h [A_{\omega}^D]_{.,i}\rangle \\ 
\vdots & \ddots & \vdots\\ 
\sum_i \langle G_h [A_{\omega}^D]_{.,i},G_1 [A_{\omega}^D]_{.,i}\rangle &  \dots &  \sum_{i}\|G_h [A_{\omega}^D]_{.,i}\|_2^2
\end{pmatrix}^{-1} \begin{pmatrix}
\sum_i \langle G_1 [A_{\omega}^D]_{.,i},[A_{\omega'}^D]_{.,i} - [A_{\omega}^D]_{.,i})  \\ 
\vdots \\ 
\sum_i \langle G_h [A_{\omega}^D]_{.,i},[A_{\omega'}^D]_{.,i} - [A_{\omega}^D]_{.,i})\end{pmatrix}},\]
where the matrix is invertible ($A_{\omega}^{D} \neq 0$).
(Proof in Appendix~\ref{proof:cortheta_2}.)
\end{prop}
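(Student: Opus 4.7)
The plan is to recognize that the quantity to be minimized,
$\left\|A^D_{\omega'} - A^D_\omega - \sum_{k=1}^h \epsilon_k G_k A^D_\omega\right\|_F^2$,
is a strictly convex quadratic form in the vector $\epsilon = (\epsilon_1,\dots,\epsilon_h)^T$, so the first–order optimality conditions will yield $\epsilon^\star$ in closed form, and the only substantive issue is invertibility of the associated Gram matrix. I would first rewrite the squared Frobenius norm using $\|X\|_F^2 = \mathrm{Tr}(XX^T)$, expanding the three cross terms: the constant $\|A^D_{\omega'}-A^D_\omega\|_F^2$, the linear term $-2\sum_k \epsilon_k\, \mathrm{Tr}\!\left[(A^D_{\omega'}-A^D_\omega)(G_k A^D_\omega)^T\right]$, and the quadratic term $\sum_{k,j}\epsilon_k\epsilon_j\, \mathrm{Tr}\!\left[(G_k A^D_\omega)(G_j A^D_\omega)^T\right]$.

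Next, I would differentiate with respect to each $\epsilon_j$, using $\frac{\partial}{\partial \epsilon_j}\mathrm{Tr}(\epsilon_k \epsilon_j M_{jk})$ in the standard way, and set all $h$ derivatives to zero simultaneously. This produces the normal equations
$$\sum_{k=1}^h \epsilon_k\, \mathrm{Tr}\!\left[(G_j A^D_\omega)(G_k A^D_\omega)^T\right] = \mathrm{Tr}\!\left[(G_j A^D_\omega)(A^D_{\omega'} - A^D_\omega)^T\right],\quad j=1,\dots,h.$$
Recognizing $\mathrm{Tr}[XY^T] = \sum_i \langle X_{\cdot,i}, Y_{\cdot,i}\rangle$, these rewrite exactly as the entries of the matrix equation displayed in the proposition, and solving the linear system gives the announced formula for $\epsilon^\star$.

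The main obstacle, and the only nontrivial point, is the invertibility claim. I would collect the $h$ matrices $G_k A^D_\omega$ by stacking their columns into vectors $v_k^{(i)} = G_k [A^D_\omega]_{\cdot,i}\in\mathbb{R}^d$, form the block $U = [U^{(1)}; \dots; U^{(h)}]$ with $U^{(i)} = [v_1^{(i)}|\cdots|v_h^{(i)}]\in\mathbb{R}^{d\times h}$, and observe that the coefficient matrix is exactly $\sum_i U^{(i)T}U^{(i)}$. This is automatically positive semidefinite, and strict positivity holds whenever no nontrivial linear combination $\sum_j \alpha_j G_j$ annihilates every column of $A^D_\omega$; in particular, when $A^D_\omega \neq 0$ and the learned generators $G_1,\dots,G_h$ are linearly independent on the column space of $A^D_\omega$, the matrix is positive definite and hence invertible. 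For any $\alpha\neq 0$ we have $\alpha^T M \alpha = \sum_i \|(\sum_j \alpha_j G_j)[A^D_\omega]_{\cdot,i}\|^2$, which vanishes only in the degenerate case singled out above. This delivers the final claim and closes the proof.
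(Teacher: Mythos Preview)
Your proposal is correct and follows essentially the same route as the paper: expand the squared Frobenius norm as a quadratic in $\epsilon$, differentiate, set the gradient to zero, and rewrite the trace inner products columnwise to obtain the displayed linear system; the paper does this with the slightly less transparent Hadamard-product identity $\|X\|_F^2 = \mathrm{Tr}\bigl((X\odot X)\mathbf{1}\mathbf{1}^T\bigr)$ but the computation is identical. Your treatment of invertibility is in fact more careful than the paper's---the paper simply asserts that the Gram matrix is a ``sum of positive definite matrices,'' whereas you correctly note that each summand $U^{(i)T}U^{(i)}$ is only positive \emph{semi}definite and that strict positivity requires the additional non-degeneracy condition that no nontrivial combination $\sum_j \alpha_j G_j$ kills every column of $A^D_\omega$.
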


\section{Orbit of a Lie Group}
\label{Lieexample}
One example of the orbit of a data with respect to a Lie group is the result of the rotation on an initial point $x(0) \in \mathbb{R}^2$, we have $x(\theta) = \exp(\theta G)x(0), \theta \in \mathbb{R}, G =  \begin{pmatrix}
0 & -1 \\ 
1 & 0
\end{pmatrix}$. In fact, where we recall that $\exp(\theta \begin{pmatrix} 0 & -1 \\ 1 & 0 \end{pmatrix}) =  \begin{pmatrix}  \cos(\theta) & -\sin(\theta)\\  \sin(\theta) & \cos(\theta)  \end{pmatrix}.$ The infinitesimal operator $G$ is thus encapsulating the group information. For more details regarding Lie group and the exponential map refer to \cite{hall2015lie}.

\iffalse
\section{Sampling}
\label{ap:sampling}
\fi

\section{Datasets}
\label{ap:dataset}

Most of the datasets used for the experiments are extracted from the univariate time-series repository in \citet{bagnall16bakeoff}. Some of them are recording from sensors or simulated data. They range from motion time-series to biological one. The dimension of the data we used is between $200-1000$.

MNIST \citep{deng2012mnist}, is a handwritten digit dataset containing $60.000$ training and  $10.000$ test images of dimension $28 \times 28$ representing $10$ classes. This is an example of a dataset where the intra-class viariability is induced by group transformations such as rotation, translation, and small diffeomorphism

CIFAR$10$ \citep{krizhevsky2009learning}, consists of $50.000$ training colour images and $10.000$ test color images of dimension $32\times32$ representing $10$ classes. In this dataset, the transformation required to go from one data to another requires more than simple rigid transformations.

\clearpage
\section{Additional Experimental Figures}
\label{app:Figures}

\begin{figure}[th!]
\begin{center}
\hspace{1cm}
    \begin{minipage}{0.17\linewidth}
    \begin{center}
    AE
    \end{center}
    \end{minipage}
        \begin{minipage}{0.17\linewidth}
        \begin{center}
    Contractive 
    \\ AE
    \end{center}
    \end{minipage}
    \begin{minipage}{0.17\linewidth}
    \begin{center}
    Denoising \\
    AE
    \end{center}
    \end{minipage}
    \hspace{.05cm}
    \begin{minipage}{0.17\linewidth}
    \begin{center}
    Lie Group \\
    AE ($1^{th}$)
    \end{center}
    \end{minipage}
    \hspace{.05cm}
        \begin{minipage}{0.17\linewidth}
    \begin{center}
    Lie Group \\
    AE $(2^{nd})$
    \end{center}
    \end{minipage}
    \end{center}
    \begin{center}

    \centering
    \begin{minipage}{0.04\linewidth}
    \rotatebox{90}{Test Set Reconst. Error}
    \end{minipage}
    \begin{minipage}{0.95\linewidth}
    \includegraphics[width=1\linewidth]{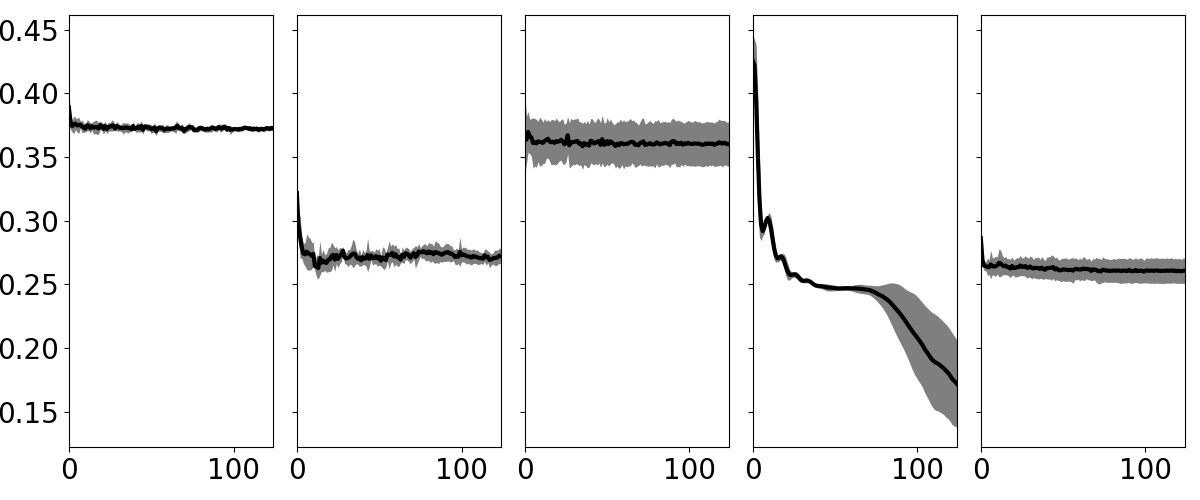}
    \end{minipage}

\end{center}
    \begin{center}

    \centering
    \begin{minipage}{0.04\linewidth}
    \rotatebox{90}{Test Set Reconst. Error}
    \end{minipage}
    \begin{minipage}{0.95\linewidth}
    \includegraphics[width=1\linewidth]{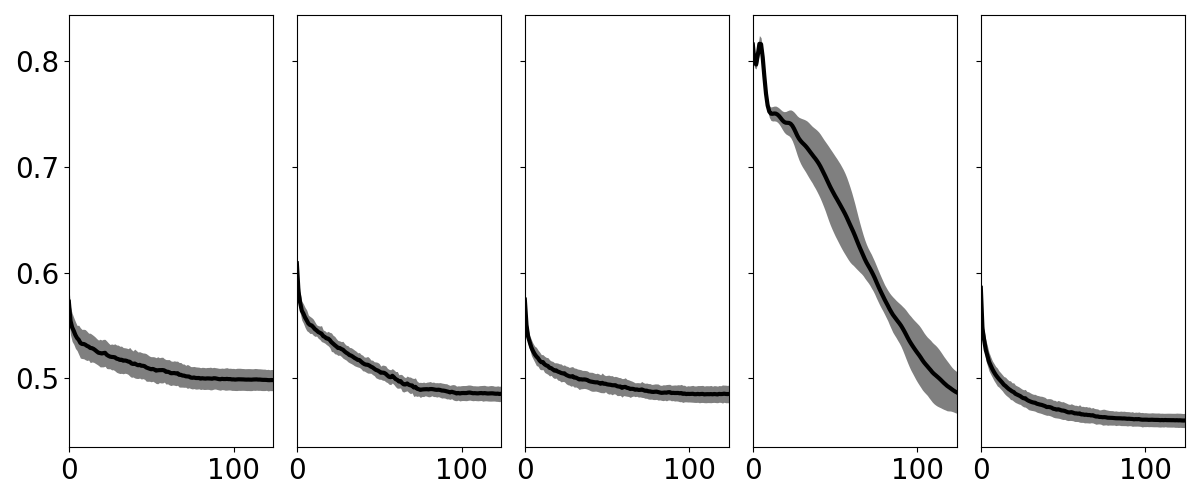}
    \end{minipage}

\end{center}

\begin{center}
\hspace{.5cm}
    \begin{minipage}{0.16\linewidth}
    \begin{center}
    Epochs
    \end{center}
    \end{minipage}
        \hspace{.05cm}
    \begin{minipage}{0.16\linewidth}
    \begin{center}
    Epochs
    \end{center}
    \end{minipage}
            \hspace{.1cm}
        \begin{minipage}{0.16\linewidth}
        \begin{center}
   Epochs
    \end{center}
    \end{minipage}
                \hspace{.2cm}
    \begin{minipage}{0.16\linewidth}
    \begin{center}
    Epochs
    \end{center}
    \end{minipage}
                \hspace{.2cm}
    \begin{minipage}{0.16\linewidth}
    \begin{center}
    Epochs
    \end{center}
    \end{minipage}
    \end{center}
    \caption{ 
    Test set reconstruction error on the (\textit{top}) Haptics and (\textit{bottom}) Facefour datasets evaluated on the best set of parameters for different AEs (from left to right): AE, Higher Order Contractive AE, Denoising AE, Lie Group AE (first-order), and Lie Group AE (second-order). For each model, the mean over $10$ runs is reported in black, and the gray area corresponds to its standard deviation.}
    \label{fig:error}
\end{figure}

\end{document}